\newtheorem{theorem}{Theorem}[section]
\newtheorem{lemma}[theorem]{Lemma}
\newtheorem{proposition}[theorem]{Proposition}
\newtheorem{definition}{Definition}
\newcommand{\hlr}{\textcolor{black}}
\newcommand*\emptycirc[1][1ex]{\tikz\draw[thick] (0,0) circle (#1);} 
\newcommand*\fullcirc[1][1ex]{\tikz\fill (0,0) circle (#1);}
\newcommand{\ec}{\emptycirc[0.8ex]}
\newcommand{\fc}{\fullcirc[0.9ex]}
\def\BibTeX{{\rm B\kern-.05em{\sc i\kern-.025em b}\kern-.08em
    T\kern-.1667em\lower.7ex\hbox{E}\kern-.125emX}}
\begin{document}

\begin{textblock*}{10cm}(1.3cm,1cm) % {block width} (coords)
    Accepted by IEEE ICDE 2024.
\end{textblock*}

\title{Unraveling Privacy Risks of Individual Fairness in Graph Neural Networks\\
% {\footnotesize \textsuperscript{*}Note: Sub-titles are not captured in Xplore and should not be used}
% \thanks{Identify applicable funding agency here. If none, delete this.}
}

% \author{\IEEEauthorblockN{He Zhang}
% \IEEEauthorblockA{\textit{Faculty of Information Technology} \\
% \textit{Monash University}\\
% Melbourne, Australia \\
% he.zhang1@monash.edu}
% \and
% \IEEEauthorblockN{Xingliang Yuan}
% \IEEEauthorblockA{\textit{School of Computing and Information Systems} \\
% \textit{The University of Melbourne}\\
% Melbourne, Australia \\
% xingliang.yuan@unimelb.edu.au}
% \and
% \IEEEauthorblockN{Shirui Pan}
% \IEEEauthorblockA{\textit{School of Info. and Comm. Tech.} \\
% \textit{Griffith University}\\
% Gold Coast, Australia \\
% s.pan@griffith.edu.au}
\author{\IEEEauthorblockN{He Zhang}
\IEEEauthorblockA{\textit{Faculty of IT} \\
\textit{Monash University}\\
Melbourne, Australia \\
he.zhang1@monash.edu}
\and
\IEEEauthorblockN{Xingliang Yuan}
\IEEEauthorblockA{\textit{School of CIS} \\
\textit{The University of Melbourne}\\
Melbourne, Australia \\
xingliang.yuan@unimelb.edu.au}
\and
\IEEEauthorblockN{Shirui Pan}
\IEEEauthorblockA{\textit{School of ICT} \\
\textit{Griffith University}\\
Gold Coast, Australia \\
s.pan@griffith.edu.au}
% \and
% \IEEEauthorblockN{4\textsuperscript{th} Given Name Surname}
% \IEEEauthorblockA{\textit{dept. name of organization (of Aff.)} \\
% \textit{name of organization (of Aff.)}\\
% City, Country \\
% email address or ORCID}
% \and
% \IEEEauthorblockN{5\textsuperscript{th} Given Name Surname}
% \IEEEauthorblockA{\textit{dept. name of organization (of Aff.)} \\
% \textit{name of organization (of Aff.)}\\
% City, Country \\
% email address or ORCID}
% \and
% \IEEEauthorblockN{6\textsuperscript{th} Given Name Surname}
% \IEEEauthorblockA{\textit{dept. name of organization (of Aff.)} \\
% \textit{name of organization (of Aff.)}\\
% City, Country \\
% email address or ORCID}
}

% \markboth{Accepted by IEEE International Conference on Data Engineering (ICDE) 2024}%
% {Shell \MakeLowercase{\textit{et al.}}: Bare Demo of IEEEtran.cls for Computer Society Journals}
% \IEEEpubid{\begin{minipage}{\textwidth}\ \\[45pt] \centering This is a note for the IEEE ICDE 2024 conference.\end{minipage}}

\maketitle

\begin{abstract}
Graph neural networks (GNNs) have gained significant attraction due to their expansive real-world applications. To build trustworthy GNNs, two aspects - fairness and privacy - have emerged as critical considerations. Previous studies have separately examined the fairness and privacy aspects of GNNs, revealing their trade-off with GNN performance. Yet, the interplay between these two aspects remains unexplored. In this paper, we pioneer the exploration of the interaction between the privacy risks of edge leakage and the individual fairness of a GNN. Our theoretical analysis unravels that edge privacy risks unfortunately escalate when the nodes' individual fairness improves. Such an issue hinders the accomplishment of privacy and fairness of GNNs at the same time. To balance fairness and privacy, we carefully introduce fairness-aware loss reweighting based on influence function and privacy-aware graph structure perturbation modules within a fine-tuning mechanism. Experimental results underscore the effectiveness of our approach in achieving GNN fairness with limited performance compromise and controlled privacy risks. This work contributes to the comprehensively developing trustworthy GNNs by simultaneously addressing both fairness and privacy aspects. 
\end{abstract}

\begin{IEEEkeywords}
Graph Neural Networks, Fairness, Privacy
\end{IEEEkeywords}

\section{Introduction}
\label{sec:intro}

% As well as competent performance, several aspects concerning trustworthiness have been proposed to build trustworthy GNNs \cite{abs-2205-07424}.
In recent years, in addition to competent performance \cite{ZhengZLZWP23, abs-2401-06176, zheng2022multi, zheng2023auto, luollm_kg2024, pan2024integrating, abs-2310-01728}, there has been an increasing desire for fairness \cite{ChiragHM2021,LiWZHL21, WangJ23, FabrisSSB23} and private information security \cite{WuYPY22,WuYPY21, Gondara0C22, DZhaoZ00L23} in GNNs, because both privacy \cite{PanZC23, abs-2312-07861, zheng2023structure, abs-2312-07870} and fairness \cite{YinZZLW23,MaGWSW23, JinWZZDXP23} are two essential concerns of GNN users \cite{abs-2205-07424}. 
In the context of GNNs \cite{ZhangYZP2022, ZhangWY0WYP21, zheng2023towards, zheng2023gnnevaluator}, existing studies \cite{HeJ0G021,KangHMT20} focus only on performance and privacy/fairness; however, privacy and fairness are not isolated from each other in practical scenarios. 
% For example, on e-Commerce platforms
% \cite{guo2022intelligent,ZhuZYLZALZ19}, where users and items are considered as nodes and their interactions are regarded as edges in a graph, a recommender system should serve all users with competent performance. Meanwhile,  
% user bias \cite{wu2022selective} should be avoided and similar users (i.e., nodes) should receive similar item recommendations to boost individual fairness; the purchase records of a customer (i.e., edges between nodes) should not be exposed to others without permission. 
As shown in Fig. \ref{fig:background}, in a graph data system \cite{LiangZ17, MassriMPM22} for efficient retrieval, a well-trained GNN is used to predict the retrieval probability of nodes in a graph database for a given datum node. 
In addition to the prediction accuracy, fairness requires that similar node candidates obtain similar probabilities in the final prediction results. However, the confidential edges in the graph database are potentially facing more serious privacy risks when improving fairness in the GNN-based retrieval system.
Therefore, it is necessary to study the interaction among performance, fairness, and privacy to build trustworthy GNNs comprehensively \cite{abs-2205-07424,abs-2212-04481, luo2024rog, abs-2310-07984, LiuD0LZP23}.

\begin{figure}
  \centering
  \includegraphics[width=\linewidth]{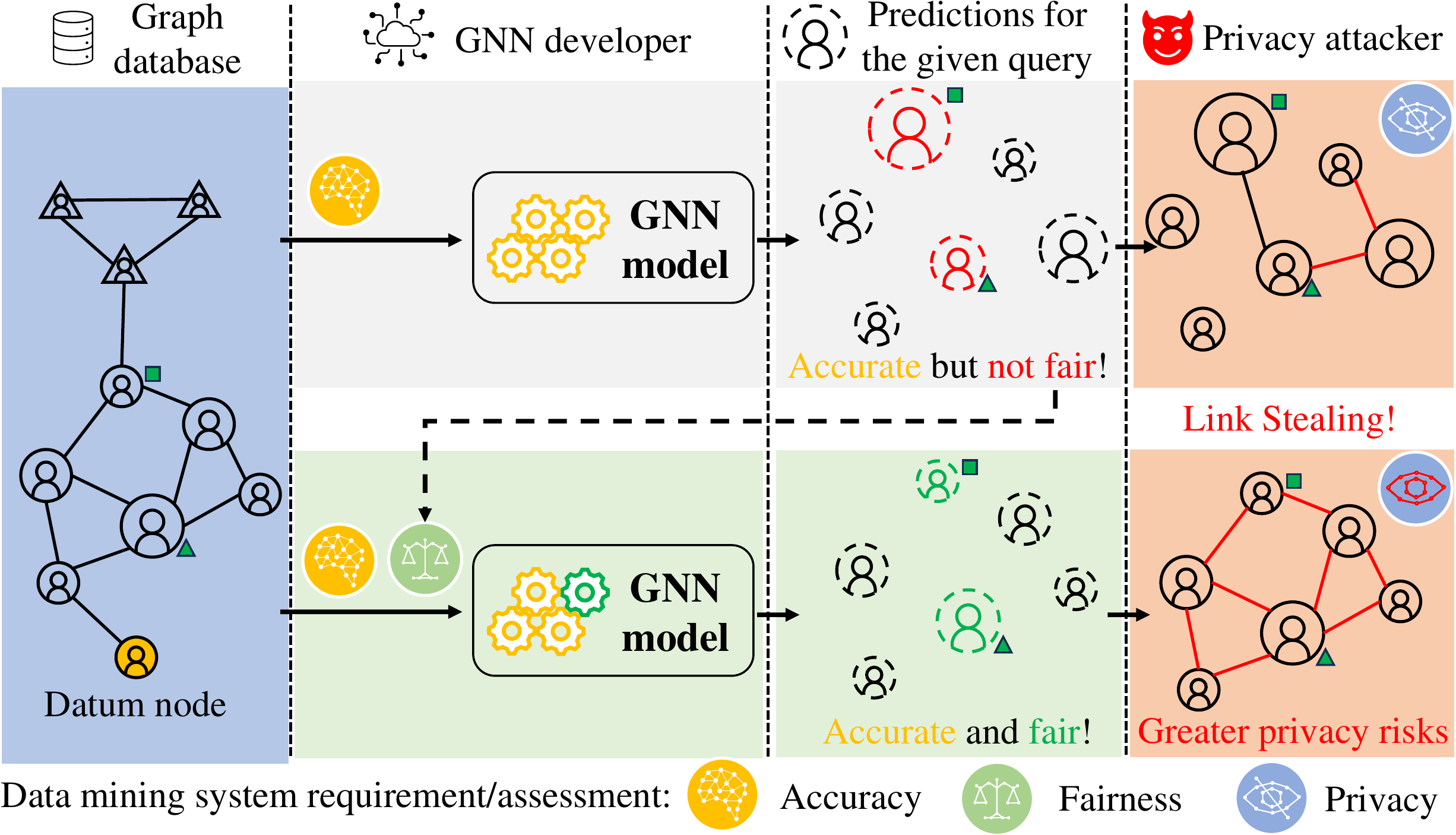}
  \vspace{-20pt}
  \caption{
  A GNN-based graph database querying system. Given the datum node in a query, the size of other nodes indicates the node similarity score for the given datum. After the vanilla training (i.e., the grey zone),  although all other nodes have been accurately predicted by a GNN from the topological view, complaints are raised as the absence of prediction fairness (e.g., for the triangle-indicated node, its size in the prediction space should be larger than the square-indicated node as that in the graph database). To this end, model developers should train GNNs by considering both accuracy and fairness (i.e., the lower green zone). However, privacy attackers can more easily infer confidential edges if GNN fairness is improved.
  }
  \label{fig:background}
\vspace{-10pt}
\end{figure}

The existing literature has studied the interaction between performance and fairness/privacy of GNNs. 
For example, to increase individual fairness in GNNs, a method called REDRESS \cite{DongKTL21} proposes to add a regularisation for fairness from the ranking perspective in the loss function of GNNs. 
Moreover, with the Lipschitz property, InFoRM \cite{KangHMT20} proposes a metric to evaluate individual fairness of nodes, which can also be involved in the loss function of a target model to reduce the bias existing in GNN predictions.
For edge privacy risks, Wu \textit{et al.} \cite{0011L0022} explore the vulnerability of edges in the training graph and use differential privacy (DP) mechanisms to prevent edges from leakage. 
Note that both lines of those studies demonstrate that achieving individual fairness or edge privacy comes at the cost of GNN performance. 

% To fully build trustworthy GNNs, it is inevitable to study the interaction between fairness and privacy. Currently, a few works have explored the impact of improving model privacy on fairness \cite{zhang2021balancing} or promoting algorithm fairness on privacy \cite{ChangS21}, which are in the context of general machine learning models for independent identically distributed (IID) data. 
% In contrast, this paper will study the interaction of individual fairness of nodes and privacy risks of edges in GNNs for complex graph data, which has not yet been explored to the best of our knowledge. 
% Moreover, given the trade-off between fairness/privacy and performance and the unexplored interaction between fairness and privacy, a challenging research topic of building trustworthy GNNs is how to ensure the privacy and fairness of a GNN model simultaneously with keeping competent model performance.
% (i.e., with limited performance cost).

In the quest to comprehensively construct trustworthy GNNs, a critical aspect that necessitates exploration is the interplay between fairness and privacy. A limited number of studies have ventured into the realm of understanding the implications of improving model privacy on fairness \cite{zhang2021balancing}, or the reciprocal effects of promoting algorithmic fairness on privacy \cite{ChangS21}. These investigations, however, predominantly pertain to the domain of general machine learning models for independent identically distributed (IID) data. 
In contrast, this paper embarks on an unexplored path, investigating the interaction between the individual fairness of nodes and the privacy risks associated with edges in GNNs, specifically within the context of complex graph data. 

To the best of our knowledge, this is a new area of GNN study \cite{WangGB0023}. 
Building upon this, devising a method that simultaneously upholds privacy, fairness, and competent performance of GNNs is confronted with two significant challenges that emerge from the current state of research. 
\textbf{(1)} The first challenge lies in navigating the potential trade-off between fairness and privacy (e.g., ${model\ bias\downarrow \ \Rightarrow\ privacy\ risk \uparrow}$). 
\textbf{(2)} The second challenge arises from the fact that the pursuit of fairness or privacy often compromises performance 
% (i.e., $model\ bias\downarrow \ \Rightarrow \ model\ accuracy \downarrow$, $privacy\ risk \downarrow \ \Rightarrow \ model\ accuracy \downarrow$). 
(i.e., ${bias\downarrow \ \Rightarrow accuracy \downarrow}$ and ${risk \downarrow \ \Rightarrow accuracy \downarrow}$). 

To address those challenges, we first theoretically and empirically verify the adverse effect of individual fairness of nodes on edges' privacy risks in GNN models.
% To understand this observation, we employ the influence functions of training samples to measure and explain this trade-off. 
Furthermore, we propose a Privacy-aware Perturbations and Fairness-aware Reweighting (PPFR) method to implement GNN fairness with limited performance cost and restricted privacy risks.
% The three-fold contributions of this paper are summarised as follows:
Our three-fold contributions are summarised as follows:
\begin{itemize}
    \item 
    In this paper, we explore the interaction between fairness of nodes and privacy of edges in GNNs for the first time. 
    We theoretically and empirically show that privacy risks of edges potentially increase when improving individual fairness of nodes, i.e., there is a \textit{trade-off between fairness and privacy}.
    % $bias\downarrow \ \Rightarrow\ risk \uparrow$.
    % \item To understand GNN behaviours concerning different trustworthiness aspects (e.g., fairness and privacy), we propose employing the influence function and Pearson correlation coefficient to quantitatively measure the interaction (e.g., trade-off) between them.
    \item With the loss reweighting based on influence function and edge perturbation driven by our theoretical analysis, we propose a novel method to obtain competent GNNs with reduced bias and edge leakage risks, 
    % which is model-agnostic and can be used after the vanilla training of any GNN models.
    which is \textit{model-agnostic} and serves in a \textit{plug-and-play} manner.
    \item Experimental results demonstrate the \textit{superiority} (i.e., lower $|\Delta_{acc}|$) and \textit{effectiveness} (i.e., $bias$ \& $risk\downarrow$ simultaneously) of our method over directly combining existing GNN fairness and privacy methods, promoting the construction of trustworthy GNNs from both privacy and fairness views.
    % \item Experimental evaluations confirm that our method outperform baselines in balancing fairness and privacy. It also and keeping GNN performance with   demonstrate the effectiveness and superiority of our PPFR method over directly combining GNN fairness and privacy methods, promoting the construction of trustworthy GNNs from both privacy and fairness perspectives.
\end{itemize}

\section{Related Work}
\label{sec:relatedwork}
% \subsection{GNN Fairness}
\subsection{Fairness}
\label{sec:relatedwork:fairness}
% There has been concern raised recently about prediction discrimination in current GNN systems.
% For example, it has been observed that GNNs can exhibit bias against certain users, who have sensitive and protected attributes such as female \cite{ChiragHM2021}.
% To promote the trustworthiness of these systems, it is essential for GNNs to accommodate the varying characteristics of people, ensuring equal service quality for users from a variety of backgrounds. 
% Currently, 
There has been concern raised recently about prediction discrimination in current GNN systems, and various efforts have been made to improve GNN fairness in their essential tasks such as node classification, where group fairness and individual fairness are two typical design goals.
% Moreover, fairness methods can be classified into pre-processing, in-processing, and post-processing categories, depending on the stage of engagement of GNN lifecycle \cite{MehrabiMSLG21}.

Group fairness expects that GNNs treat different groups of users equally. 
% Common metrics for group fairness include demographic parity \cite{KusnerLRS17}, equalised odds, and equal opportunity \cite{MehrabiMSLG21}. 
For example, the demographic parity \cite{KusnerLRS17} aims to make the predicted label (e.g., ${y\in \{0,1\}}$ in binary node classification) of a GNN independent of the membership of nodes in a sensitive group (e.g., female). It can be defined by ${P(y \mid s=0)=P(y \mid s=1)}$, where $s$ indicates if a node is in a sensitive group \cite{VermaR18} and $P$ denotes probability.
% Following this requirement, a method called FairGNN \cite{DaiW21} considers improving the group fairness of GNNs in some practical scenarios where grouping attributes (e.g., regions) are only sparsely annotated.
Following this goal, a method called FairGNN \cite{DaiW21} improves the group fairness in situations where grouping attributes (e.g., regions) are sparsely annotated.
% For the sparse attribute annotation issue, it trains an additional GNN module to estimate the missing attribute value in the training data, which is used by another adversary module to update parameters of the backbone GNN for fairness.
% It trains an additional GNN module to estimate missing attribute values, which is used by another adversary module to update the backbone GNN.
% To avoid fine-tuning fairness methods for different backbone GNNs, 
To avoid fine-tuning fairness methods (e.g., FairGNN), 
EDITS \cite{DongLJL22} proposes a model-agnostic method to mitigate bias in graph data, where the updated graph can be released to develop GNNs with group fairness.

Individual fairness requires that similar users are treated similarly by GNNs, as shown in Section \ref{sec:problem}. 
To achieve this, a method called REDRESS \cite{DongKTL21} proposes to involve a fairness-aware regularisation in the loss function during the normal GNN training. 
% Specifically, for the given base node $v_i$, this regularisation considers the similarity comparison results of other nodes (e.g., if the similarity score of $v_j$ is greater than that of $v_k$ when comparing with $v_i$) and converts these results into a binary classification task to train the GNN model. 
With the Jaccard similarity obtained from graph structure or the cosine similarity derived from node features, InFoRM \cite{KangHMT20} proposes another fairness-aware regularisation (Section \ref{sec:problem}) by enforcing the Lipschitz property \cite{DworkHPRZ12, XieMXY21} of GNNs, which is also model-agnostic and can be used in a plug-and-play manner. 
Recently, another method called GFairHint \cite{abs-2305-15622} constructs a fairness graph whose edge weight is derived from node similarity and trains an additional module to obtain the fairness hint (i.e., node embedding from the fairness graph). 
The prediction fairness is improved in downstream tasks by concatenating the fairness hint and node embedding from the backbone GNN together.

% \textit{Remarks.} 
This paper focusses on the individual fairness of nodes. Specifically, we concentrate on regularisation methods \cite{KangHMT20}, as model developers with limited resources tend to choose the method with fewer implementation efforts \cite{AlsaadiS22}.
For the similarity in individual fairness, we use the Jaccard similarity derived from edges (refer to Section \ref{sec:back}) for two reasons. 
\textbf{(1)} The edge contributes to the uniqueness of graph data and makes it different from other data (e.g., tabular data).
\textbf{(2)} The graph structure is always available for graph data, even in some practical scenarios \cite{AdamicG05} where node features are not provided.
Moreover, note that methods for group fairness(e.g., FairGNN) generally cannot be used to improve individual fairness, as they have different design goals.
% which highlight the urgent requirements on post-processing methods

\subsection{Privacy}
\label{sec:relatedwork:privacy}
In addition to the GNN model, recent studies have shown that graph data is also vulnerable to privacy attacks \cite{abs-2205-07424}. 
A typical attack on nodes is the membership inference attack \cite{abs-2102-05429}, which aims to infer if a node is in the training graph of a GNN model.
Next, we will present representative attack and defence methods for the privacy of edges.

A typical edge-level privacy attack is the link stealing attack \cite{HeJ0G021}.
% , where eight different attack settings have been categorised according to the knowledge of attackers.
In its black-box attack setting, attackers can infer the existence of edges in the training graph only by querying the target GNN just once. 
This attack is motivated by ``two nodes are more likely to be linked if they share more similar attributes and/or predictions''.
Another method called LinkTeller \cite{0011L0022} obtains the connection status of a target node (e.g., $v_i$) by querying the target GNN twice and comparing the prediction difference before and after adding perturbations to the node features. 
% Its intuition is that only the neighbour nodes of $v_i$ have prediction changes due to the massage-passing mechanism of most GNNs.
To reduce the privacy risk, Wu \textit{et al.} \cite{0011L0022} propose the edge DP method where noises sampled from a random or Laplacian distribution are added to the original graph structure, effectively reducing the performance of privacy attack methods from the differential privacy view.  
With the same design goal, Wang \textit{et al.} \cite{WangGLCL21} present a privacy-preserving representation learning method, where the mutual information between the nodes and edges are minimised during the training of GNNs.

In this paper, we focus on link stealing attacks because it requires fewer query efforts from attackers. For the defence side, we consider the edge DP \cite{0011L0022} as it can be deployed easily and needs less cost of GNN developers, comparing with the privacy-preserving representation learning method where training two additional auxiliary networks is necessary. 

\begin{table}
\centering
\caption{Research differences between our paper and other methods.}
\label{tab:scope}
\vspace{-5pt}
\begin{threeparttable}
\resizebox{0.4\textwidth}{!}{
\begin{tabular}{c|cc|ccc|c}
\hline
\multirow{2}{*}{Methods}                             & \multicolumn{2}{c|}{Scope}                          & \multicolumn{3}{c|}{Design Goal}                                        & \multirow{2}{*}{Interplay\tnote{*}} \\ \cline{2-6}
& \multicolumn{1}{l|}{AI\tnote{*}} & \multicolumn{1}{l|}{GNNs} & \multicolumn{1}{l|}{Acc\tnote{*}} & \multicolumn{1}{c|}{Fairness} & Privacy &                            \\ \hline
\cite{XuYW19,HuLWL19,DingZLWYP20,MozannarOS20, 0007FH21} & \fc                      & \ec                         & \fc                            & G                           & \multicolumn{1}{c|}{\fc}       & \ec
\\ 
\cite{zhang2021balancing} & \fc                      & \ec                         & \fc                            & G                           & \multicolumn{1}{c|}{\fc}       & \fc
\\
\hline
\cite{KangHMT20} \cite{DongLJL22}  & \ec                       & \fc                          & \fc                              & \fc                              & \multicolumn{1}{c|}{\ec}    & \ec    \\
\cite{0011L0022} \cite{WangGLCL21}  & \ec                       & \fc                          & \fc                              & \ec                             & \multicolumn{1}{c|}{\fc}  & \ec       \\
\hline
\cite{DaiW21}       & \ec                       & \fc                          & \fc                              & G                            & \multicolumn{1}{c|}{N}  & \ec    \\
\cite{WangGB0023}      & \ec                       & \fc                          & \fc                              & I                            & \multicolumn{1}{c|}{N}  & \ec  \\
\textbf{Ours}    & \ec                       & \fc                          & \fc                              & I                            & \multicolumn{1}{c|}{E}   & \fc   \\ \hline
\end{tabular}
}
\begin{tablenotes}
    \footnotesize 
    % \small
    \item[*] In this table, AI represents model design for non-graph data (e.g., tabular data). 
    ``Acc'' denotes accuracy, and ``Interplay'' concerns fairness and privacy.
    \ec/\fc  \ indicates ``Not Covered''/``Covered". ``G''/``I'' denotes group/individual fairness of nodes/samples, and ``N''/``E'' represents the privacy of nodes/edges.
\end{tablenotes}
\end{threeparttable}
\vspace{-10pt}
\end{table}

\subsection{Fairness and Privacy}
% Currently, the topic of limited related studies falls into discussing the implications of improving model privacy on fairness \cite{zhang2021balancing}, or the reciprocal effects of enhancing model fairness on privacy \cite{ChangS21}. 
Existing works of considering fairness and privacy simultaneously can be categorised into relationship study and method study.
For the relation aspect, limited studies discuss the implications of improving model privacy on fairness \cite{zhang2021balancing}, or the reciprocal effects of enhancing model fairness on privacy \cite{ChangS21}. 
For the method aspect, some methods \cite{zhang2021balancing} have emerged to implement fairness and privacy simultaneously for other AI models.
% homomorphic encryption for support vector machines \cite{ParkBL22}
However, their relationship conclusions or method designs  \cite{XuYW19,HuLWL19,DingZLWYP20,MozannarOS20, 0007FH21,LyuYNLMJYN20,ParkBL22} cannot be directly adapted to GNNs due to the data and model differences.

In the context of GNNs, existing methods \cite{DaiW21, WangGB0023} are not available in this study, as their design goals are different from ours (i.e., individual fairness of nodes and privacy of edges). 
For example, FairGNN \cite{DaiW21} targets the group fairness of nodes and the privacy of sensitive node features; LPF-IFGNN \cite{WangGB0023} expects the individual fairness of nodes and the privacy of node features/labels. 
Table \ref{tab:scope} shows the research differences between our paper and other studies.

\section{Preliminaries }
\label{sec:back}
\noindent
\textbf{Graphs.} 
A graph $G=\{\mathcal{V}, \mathcal{E}\}$ includes an edge set $\mathcal{E}$ and a node set $\mathcal{V}=\left\{v_{1}, \ldots, v_{|\mathcal{V}|}\right\}$. 
$\mathcal{E}$ characterises the relationship information in $G$. 
The set of edges can also be denoted by an adjacency matrix $\mathbf{A}\in\{0,1\}^{|\mathcal{V}| \times |\mathcal{V}|}$, in which $\mathbf{A}_{i,j}=1$ when $e_{ij}=(v_{i},v_{j}) \in \mathcal{E}$, otherwise $\mathbf{A}_{i,j}=0$. 
Matrix $\mathbf{X} \in$ $\mathbb{R}^{|\mathcal{V}| \times k}$ ($k$ indicates the dimensionality of features)
denotes node features, the $i$-th row of $\mathbf{X}$ represents the feature of node $v_{i}$. 
Without loss of generality, another description form of a graph is $G=\{\mathbf{A}, \mathbf{X}\}$. 
In this paper, we focus on the undirected graph.
% , i.e., $\mathbf{A}_{i,j}=\mathbf{A}_{j, i}$.

\noindent
\textbf{Node Classification} and \textbf{GNNs.}
% In this paper, we focus on node calssification task and the GCN model with ReLU activation. Moreover, nodes in the input graph $G$ are assumed to be exchangeable. 
For a graph $G=\{\mathcal{V}, \mathcal{E}\}$, the set of labelled nodes is denoted by $\mathcal{V}_{l} \subset \mathcal{V}$, where $y_{v}$ is the label of $v \in \mathcal{V}_{l}$. 
The set of unlabelled nodes is denoted by $\mathcal{V}_{u} = \mathcal{V} \setminus \mathcal{V}_{l}$. Given $G$ and node labels, node classification aims to train a GNN model, which can predict labels for nodes in $\mathcal{V}_{u}$.
% In this paper, we consider the graph convolutional network (GCN) model \cite{KipfW17}, which is a typical GNN model for node classification. 
The graph convolutional network (GCN) \cite{KipfW17} is a typical GNN model for node classification. 
In a GCN model, we assume that $\mathbf{E}^{(l)}$ and $\mathbf{W}^{(l)}$ denote the node embeddings and weight matrix of the $l$-th hidden layer, respectively. The graph convolution at the $l$-th layer is formulated as $\mathbf{E}^{(l)}=\sigma(\hat{\mathbf{A}} \mathbf{E}^{(l-1)} \mathbf{W}^{(l)})$, where $\sigma$ is the nonlinear activation, 
$\hat{\mathbf{A}}=\tilde{\mathbf{D}}^{-\frac{1}{2}}(\mathbf{~A}+\mathbf{I}) \tilde{\mathbf{D}}^{-\frac{1}{2}}$, 
% $\hat{\mathbf{A}}=\tilde{\mathbf{D}}^{-1}(\mathbf{~A}+\mathbf{I})$, 
and $\tilde{\mathbf{D}}$ being the degree matrix of $(\mathbf{A}+\mathbf{I})$.

% \noindent
% \textbf{GNNs.} 
% In this paper, we focus on common message-passing-based GNNs, where the node embedding is repeatedly updated by stacking operations as follows \cite{GilmerSRVD17}:
% \begin{equation}
% \label{eq:messagepassing}
% \begin{aligned}
% \mathbf{m}_{v}^{(t)}&=\sum_{u\in\mathcal{N}(v)}M_{t}(\mathbf{h}_u^{(t-1)},\mathbf{h}_v^{(t-1)}), \\
% \mathbf{h}_v^{(t)} &= U_t(\mathbf{h}_v^{(t-1)}, \mathbf{m}_v^{(t)}),
% \end{aligned}
% \end{equation} 
% where $\mathbf{h}_v^{(t)}$ represents the node embedding of $v$ at layer $t \in \{1,\ldots,T\}$, and the neighbour node set of $v$ in graph $G$ is denoted by $\mathcal{N}(v)$. 
% At layer $t$, $M_t(\cdot,\cdot)$ indicates the message function and $U_t(\cdot,\cdot)$ represents the embedding updating function. 

% By the above operations, message-passing-based GNNs can provide graph embedding for various tasks. 

\noindent
\textbf{Jaccard Similarity} and \textbf{$k$-hop Node Pairs.}
\textbf{(1)} When using $\mathcal{N}(i)$ to denote the set of connected nodes of $v_i$, $v_j \in \mathcal{N}(i)$ if $\mathbf{A}_{i,j}=1$, otherwise $v_j \notin \mathcal{N}(i)$ (i.e., $\mathbf{A}_{i,j}=0$). The Jaccard similarity matrix $\mathbf{S}$ is derived from $\mathbf{A}$, and each element is defined by $\mathbf{S}_{i,j} = \frac{|\mathcal{N}(i) \cap \mathcal{N}(j)|}{|\mathcal{N}(i) \cup \mathcal{N}(j)|}$.
\textbf{(2)} In this paper, $v_j$ is a $k$-hop neighbour of $v_i$ when the edge number in the shortest path between $v_i$ and $v_j$ is $k$ ($k \in \mathbb{Z}^{+}$), and $(v_i, v_j)$ is called a $k$-hop ($k\geq 1$) node pair. 
Specially, $(v_i, v_j)$ is a $\infty$-hop node pair if $v_i$ and $v_j$ are isolated and there is no path between them.

\noindent
\textbf{Homophily} and \textbf{Sparsity}. \textbf{(1)} For simplicity, we use $n = |\mathcal{V}|$ to represent the node number in $G=\{\mathcal{V}, \mathcal{E}\}$.
To depict the biased node connections, we use $p$ and $q$ to denote the intra-class and inter-class linking probabilities, respectively.
Specifically, given a node $v_i$, for its intra-class neighbour set ${\mathcal{N}(i)}_{\text{intra}}$ and inter-class neighbour set ${\mathcal{N}(i)}_{\text{inter}}$, the set size expectations are $\mathbb{E}[|{\mathcal{N}(i)}_{\text{intra}}|]=(n-1)p$ and $\mathbb{E}[|{\mathcal{N}(i)}_{\text{inter}}|]=(n-1)q$, respectively.
Note that we have $p > q \geq 0$ due to the homophily of graph data.
\textbf{(2)} Comparing with the number of connected node pairs (i.e., $1$-hop node pairs), the number of unconnected node pair is obvious larger as the sparsity of graph data \cite{GNNBook-ch4-tang}, which indicates that $1 > 1-p \gg p$. 

\section{Problem Statement}
\label{sec:problem}
This section introduces the definitions of individual fairness of nodes and privacy risks of edges, followed by presenting the research questions in this paper.

\noindent
\textbf{Individual Fairness of Nodes.} 
% To be fair to all users, GNNs should ensure that everyone is treated equally and receives the same quality of service regardless of their background. 
In node classification, \textit{individual fairness} requires that \textit{any two similar nodes should receive similar GNN predictions} \cite{KangHMT20}. Concretely, given the (Jaccard) similarity matrix $\mathbf{S}$ of nodes and GNN predictions $\mathbf{Y}$, the bias w.r.t. individual fairness is measured by $Bias(\mathbf{Y},\mathbf{S}) = Tr(\mathbf{Y}^{T}\mathbf{L_{S}Y})$, where $\mathbf{Y}^{T}$ represents the transpose of $\mathbf{Y}$, $\mathbf{L_{S}}$ indicates the Laplacian matrix of $\mathbf{S}$ \cite{KangHMT20}. 
To improve fairness, the InFoRM proposes to involve $Bias(\mathbf{Y},\mathbf{S})$ in the loss function during GNN training \cite{KangHMT20}.
\vspace{-5pt}
\begin{definition}[Individual Fairness of Nodes]
Given the defined
\begin{equation}
    f_{bias}:= Tr(\mathbf{Y}^{T}\mathbf{L_{S}Y}),
\end{equation}
the individual fairness of nodes can be achieved by $\min f_{bias}$ \cite{KangHMT20}.
In particular, observations of decreasing $f_{bias}$, i.e., $ f_{bias} \downarrow$, indicate fairer predictions on nodes.
\end{definition}
\vspace{-5pt}
\noindent
Refer to Section \ref{sec:relatedwork:fairness} for why we focus on Jaccard similarity.

\noindent
\textbf{Privacy Risks of Edges.} 
Existing studies \cite{HeJ0G021,0011L0022} show that attackers are capable of inferring the graph structure in the training graph of a GNN model. 
For example, 
% by querying the node predictions of the target GNN model, 
% as shown in Appendix \ref{sec:appendix:proof}, 
given a well-trained GNN , \textit{link stealing attacks} use the distance $d(v_i,v_j)$ between predictions on $v_i$ and $v_j$ to infer if they are connected, whose intuition is that \textit{if two nodes share more similar predictions from the target GNN model then there is a greater likelihood that the nodes are linked together} \cite{HeJ0G021}. 
% More details (e.g., rationale) can be found in Appendix \ref{sec:appendix:proof}.

Taking the black-box attack setting as an example \cite{HeJ0G021}, attackers only need to query the victim GNN to obtain its predictions (e.g., $GNN(v_i)$ on node $v_i$). Next, the attacker introduces a distance function (e.g., cosine similarity) to obtain the distance between any two nodes (i.e., $d(v_i,v_j)=d(GNN(v_i),GNN(v_j))$) and uses the KNN method to divide all node pairs into two clusters, i.e., the cluster of closer node pairs and the cluster of more distant node pairs. $v_i$ and $v_j$ are inferred to be connected if $d(v_i,v_j)$ is in the cluster with smaller distances; otherwise $v_i$ and $v_j$ are not connected.

Note that, given a $k$-hop node pair $(v_i,v_j)$, $v_i$ and $v_j$ are either unconnected (i.e., $k \geq 2$) or connected (i.e., $k = 1$).
We use $d_{0}=d_{0}(v_i,v_j)$ to denote the prediction distance of $v_i$ and $v_j$ when they are unconnected, otherwise $d_{1}=d_{1}(v_i,v_j)$ when they are connected. 
According to link stealing attacks \cite{HeJ0G021}, the privacy risk of edges is measured by the distinguishability between the distributions of $d_{0}$ and $d_{1}$. 
Thus, following existing studies \cite{GrettonBRSS06, BalunovicRV22}, we propose the definition of privacy risks of edges as following:
\vspace{-5pt}
\begin{definition}[Privacy Risks of Edges]
As the statistical distance between distributions, the privacy risks of edges under link stealing attacks is defined as
\begin{equation}
    f_{risk}:=\overline{\Delta d} =  \|\overline{d_{0}} - \overline{d_{1}}\|,
\end{equation}
where $\overline{d_{0}} = \mathbb{E}[d_{0}] $ and $\overline{d_{1}} = \mathbb{E}[d_{1}]$ denote the expectation of $d_{0}$ and $d_{1}$, respectively.
% In particular, 
Particularly, observations of increasing $\overline{\Delta d}$, i.e., $ f_{risk} \uparrow$, indicate the more easily detectable distinguishability between connected and unconnected node pairs.
\end{definition}
% Given $\overline{\Delta d} =  \|\overline{d_{0}} - \overline{d_{1}}\|$, 
% a larger value of $\overline{\Delta d}$ represents a higher privacy risk of edges, which is consistent with the intuition of link stealing attacks \cite{HeJ0G021}.
\vspace{-5pt}

\noindent
\textbf{Research Questions.} Our study is motivated by the potential connection between the goal of individual fairness (i.e., \textit{decreasing the prediction difference between similar nodes}) and the intuition of link stealing attacks (i.e., \textit{nodes with more similar predictions are prone to be connected}), which indicates that improving individual fairness may lead to more serious privacy risks on edges. 
% Therefore, the first research question (RQ) is (\textbf{RQ1}, $ f_{bias} \downarrow \Rightarrow f_{risk} ?$) ``\textit{Is individual fairness of nodes potentially at the cost of privacy risks of edges}?'' 
% If so, the following research question is (\textbf{RQ2}) ``\textit{Could we devise a method that can alleviate the adverse effect on privacy when improving fairness}?''
Thus, our research questions (RQ) are:
\\ \noindent
\textbf{RQ1}: (${f_{bias} \downarrow} \Rightarrow f_{risk} ?$) ``\textit{Is individual fairness of nodes potentially at the cost of privacy risks of edges}?'' If so, the following question is 
\\ \noindent
\textbf{RQ2}: ($f_{bias}$ \& $f_{risk} \downarrow $ simultaneously?) ``\textit{Could we devise a method that can alleviate the adverse effect on privacy risks of edges when improving individual fairness of nodes}?''

\section{Trade-off between Individual Fairness and Privacy Risks}
\label{sec:tradeoff}
To answer \textbf{RQ1}, after discussing the value in the Jaccard similarity matrix, we perform a theoretical analysis to connect the individual fairness and privacy risks. 
\begin{lemma}
    Given a graph $G=\{\mathbf{A}, \mathbf{X}\}$, for the Jaccard similarity $\mathbf{S}$ derived from $\mathbf{A}$, we have 
    \begin{equation}
    \label{eq:S_value}
        \mathbf{S}_{i,j} 
        \begin{cases}
        > 0 & \text{if\ \ } k \leq 2, \\
        =0 & \text{if\ \ } k > 2.        
        \end{cases}
    \end{equation}
    where $k$ indicates that $(v_i, v_j)$ is a $k$-hop ($k\geq 1$) node pair.
    % (where $\mathbf{S}_{i,j} = \frac{|\mathcal{N}(i) \cap \mathcal{N}(j)|}{|\mathcal{N}(i) \cup \mathcal{N}(j)|}$)
\end{lemma}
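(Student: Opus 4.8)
The plan is to reduce the claimed dichotomy to a purely combinatorial statement about common neighbours, and then translate ``having a common neighbour'' into a bound on the shortest-path length $k$. First I would observe that, because the denominator $|\mathcal{N}(i)\cup\mathcal{N}(j)|$ dominates the numerator $|\mathcal{N}(i)\cap\mathcal{N}(j)|$ and is strictly positive whenever at least one endpoint has a neighbour, the sign of $\mathbf{S}_{i,j}$ is governed entirely by the numerator: $\mathbf{S}_{i,j}>0$ exactly when $\mathcal{N}(i)\cap\mathcal{N}(j)\neq\emptyset$, i.e.\ when $v_i$ and $v_j$ share at least one common neighbour. The only degenerate situation is when both nodes are isolated, in which case the ratio $0/0$ is read as $0$; this falls under the $\infty$-hop, hence $k>2$, branch.

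The ``$k>2\Rightarrow \mathbf{S}_{i,j}=0$'' direction then follows by contraposition. If $v_m\in\mathcal{N}(i)\cap\mathcal{N}(j)$, then $v_i$--$v_m$--$v_j$ is a walk of length at most two, so the shortest-path distance between $v_i$ and $v_j$ is at most $2$; contrapositively, $k>2$ forces the common-neighbour set to be empty and hence $\mathbf{S}_{i,j}=0$. For the converse ``$k\le 2\Rightarrow \mathbf{S}_{i,j}>0$'', I would split on $k$. When $k=2$, the definition of a $2$-hop pair supplies a shortest path $v_i$--$v_m$--$v_j$ whose midpoint $v_m$ is adjacent to both endpoints and is therefore a common neighbour, so the numerator is at least $1$ and $\mathbf{S}_{i,j}>0$.

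The delicate case, and the main obstacle, is $k=1$. With the open neighbourhood as literally defined (so $v_i\notin\mathcal{N}(i)$), mere adjacency of $v_i$ and $v_j$ need not produce a shared neighbour: a graph consisting of the single edge $e_{ij}$ gives $\mathcal{N}(i)=\{v_j\}$ and $\mathcal{N}(j)=\{v_i\}$, whose intersection is empty. To recover the clean dichotomy at $k\le 2$ that the lemma asserts, I would compute the Jaccard similarity on the self-loop-augmented neighbourhood $\mathcal{N}[i]:=\mathcal{N}(i)\cup\{v_i\}$, precisely the closed neighbourhood induced by the $\mathbf{A}+\mathbf{I}$ used inside the GCN propagation. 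Under this consistent reading, whenever $v_i$ and $v_j$ are adjacent both endpoints satisfy $v_i,v_j\in\mathcal{N}[i]\cap\mathcal{N}[j]$, so the overlap is non-empty and $\mathbf{S}_{i,j}>0$.

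Finally I would recheck that adding self-loops does not create spurious overlaps for $k>2$, so that the first direction survives the change of convention: any common element of $\mathcal{N}[i]\cap\mathcal{N}[j]$ is either an endpoint (which forces $v_i$ and $v_j$ to be adjacent, i.e.\ $k=1$) or a genuine common neighbour (which forces $k\le 2$), so the $k>2$ branch still yields $\mathbf{S}_{i,j}=0$. Resolving this self-loop convention is the only real subtlety; the remaining steps are immediate from the definitions of $k$-hop pairs and of the Jaccard similarity.
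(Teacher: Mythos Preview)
Your proposal is correct and follows essentially the same approach as the paper: both reduce the sign of $\mathbf{S}_{i,j}$ to whether $\mathcal{N}(i)\cap\mathcal{N}(j)$ is nonempty, handle $k=1$ by invoking the self-loop augmentation from $\mathbf{A}+\mathbf{I}$ so that both endpoints lie in the intersection, handle $k=2$ via the midpoint of a shortest path, and dispatch $k>2$ directly from the definition. Your write-up is in fact more careful than the paper's, since you explicitly flag the open-versus-closed neighbourhood issue and verify that the self-loop convention does not create spurious overlaps in the $k>2$ branch.
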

\begin{proof}
According to $\mathbf{S}_{i,j} = \frac{|\mathcal{N}(i) \cap \mathcal{N}(j)|}{|\mathcal{N}(i) \cup \mathcal{N}(j)|}$, we only discuss the value of $|\mathcal{N}(i) \cap \mathcal{N}(j)|$ as ${|\mathcal{N}(i) \cup \mathcal{N}(j)| > 0}$ always stands.
\\ \noindent
(1) $k = 1$: The normalisation operation $\hat{\mathbf{A}}=\tilde{\mathbf{D}}^{-\frac{1}{2}}(\mathbf{~A}+\mathbf{I}) \tilde{\mathbf{D}}^{-\frac{1}{2}}$ causes $\{v_i,v_j\} \in \mathcal{N}(i)$ and $\{v_j,v_i\} \in \mathcal{N}(j)$, which indicates that $|\mathcal{N}(i) \cap \mathcal{N}(j)|\geq 2>0$.
\\ \noindent
(2) $k = 2$: At least one node $v_k$ exists on the shortest path between $v_i$ and $v_j$, i.e., $v_k \in \mathcal{N}(i)$ and $v_k \in \mathcal{N}(j)$, which indicates that $|\mathcal{N}(i) \cap \mathcal{N}(j)|\geq 1 >0$.
\\ \noindent
(3) $k > 2$: $|\mathcal{N}(i) \cap \mathcal{N}(j)|=0$ according to the definition of $k$-hop neighbours.
\end{proof}

% Based on the above lemma, we obtain the following proposition.
\noindent
The above lemma leads to the following proposition.
\begin{framed}
\vspace{-10pt}
\begin{proposition}
\label{eq:proposition}
    For a GNN model trained on a homophily and sparse graph, improving individual fairness of nodes is at the cost of privacy risks of edges, i.e., 
    \begin{equation*}
        % f_{bias}\downarrow \quad \Rightarrow \quad f_{risk} \uparrow
        f_{bias}\downarrow \quad \Rightarrow \quad f_{risk} \uparrow
    \end{equation*}
\end{proposition}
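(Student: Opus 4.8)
The plan is to rewrite $f_{bias}$ as a weighted sum of squared prediction distances and then track how minimizing that sum redistributes the two expected distances $\overline{d_0}$ and $\overline{d_1}$ that define $f_{risk}$. First I would expand the Laplacian quadratic form as
\[
f_{bias} = Tr(\mathbf{Y}^{T}\mathbf{L_{S}Y}) = \frac{1}{2}\sum_{i,j}\mathbf{S}_{i,j}\,\|\mathbf{Y}_i - \mathbf{Y}_j\|^2 ,
\]
and identify the prediction difference $\|\mathbf{Y}_i - \mathbf{Y}_j\|$ with (a monotone proxy of) the attacker's distance $d(v_i,v_j)=d(GNN(v_i),GNN(v_j))$, so that $f_{bias}=\tfrac{1}{2}\sum_{i,j}\mathbf{S}_{i,j}\,d(v_i,v_j)^2$. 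Invoking the preceding Lemma, every node pair at hop $k>2$ has $\mathbf{S}_{i,j}=0$ and therefore drops out of the sum; only $1$-hop (connected) and $2$-hop (unconnected) pairs carry positive weight.

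Next I would split the surviving terms by connectivity. Writing $\mathcal{P}_1$, $\mathcal{P}_2$, and $\mathcal{P}_{\geq 3}$ for the sets of $1$-hop, $2$-hop, and $(\geq 3)$-hop pairs, the bias becomes
\[
f_{bias} = \frac{1}{2}\Big(\sum_{(i,j)\in\mathcal{P}_1}\mathbf{S}_{i,j}\,d_1(v_i,v_j)^2 + \sum_{(i,j)\in\mathcal{P}_2}\mathbf{S}_{i,j}\,d_0(v_i,v_j)^2\Big),
\]
because the pairs in $\mathcal{P}_1$ are exactly the connected pairs (carrying $d_1$) while those in $\mathcal{P}_2$ are unconnected (carrying $d_0$). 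Driving $f_{bias}\downarrow$ thus simultaneously pushes down the connected distances on all of $\mathcal{P}_1$ and the unconnected distances on $\mathcal{P}_2$, while leaving every distance on $\mathcal{P}_{\geq 3}$ — which is invisible to the fairness loss — untouched.

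The heart of the argument is then an asymmetry in how $\overline{d_1}$ and $\overline{d_0}$ respond. Since every connected pair lies in $\mathcal{P}_1$, the whole of $\overline{d_1}=\mathbb{E}[d_1]$ is driven down by the minimization. By contrast $\overline{d_0}=\mathbb{E}[d_0]$ averages over all unconnected pairs, i.e. over $\mathcal{P}_2\cup\mathcal{P}_{\geq 3}$, and only the $\mathcal{P}_2$ part is affected. Here I would invoke sparsity ($1 > 1-p \gg p$): the overwhelming majority of unconnected pairs sit at hop $\geq 3$, so $|\mathcal{P}_2|\ll|\mathcal{P}_{\geq 3}|$ and the reduction of distances on $\mathcal{P}_2$ is diluted to a negligible change in $\overline{d_0}$. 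Using homophily ($p>q$) to guarantee that connected nodes share predictions more than unconnected ones, so that $\overline{d_0}-\overline{d_1}>0$ to begin with, I would conclude that $\overline{d_1}$ falls while $\overline{d_0}$ stays essentially fixed, hence $f_{risk}=\|\overline{d_0}-\overline{d_1}\|$ increases, which is the claim.

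The main obstacle I anticipate is the passage from the scalar aggregate $f_{bias}$ to the componentwise behaviour of the expectations $\overline{d_1}$ and $\overline{d_0}$: minimizing a single weighted sum does not by itself force each summand, or each group average, to decrease. I would close this gap by arguing at the level of the fairness gradient — each optimization step contracts the weighted per-pair distances $\mathbf{S}_{i,j}\,d(v_i,v_j)^2$ on $\mathcal{P}_1\cup\mathcal{P}_2$ — and by making the comparison $|\mathcal{P}_2|/|\mathcal{P}_{\geq 3}|\to 0$ quantitative through the linking probabilities $p,q$ of the model in Section \ref{sec:back} (bounding the expected counts of $2$-hop versus far pairs). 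Establishing that this count ratio is genuinely negligible, rather than merely small, is the step that carries the real weight of the proof.
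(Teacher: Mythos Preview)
Your proposal is correct and follows essentially the same route as the paper: expand $f_{bias}$ via the Laplacian quadratic form, use the Lemma to restrict nonzero weights to $1$- and $2$-hop pairs, observe that all of $\overline{d_1}$ is pulled down while only the $2$-hop sliver of $\overline{d_0}$ is touched, and then invoke sparsity and homophily to argue that sliver is negligible. The paper makes the last step concrete by computing the ratio of $2$-hop to all unconnected pairs as $(p+q)^2/(1-(p+q))$; otherwise your argument matches, and you are in fact more careful than the paper on two points it glosses over --- the initial sign of $\overline{d_0}-\overline{d_1}$ and the passage from the scalar objective to group-average decrease.
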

\vspace{-10pt}
\end{framed}
\vspace{-8pt}
\begin{proof} 
To derive it, we present related notations (e.g., $k$-hop node pairs, homophily), followed by connecting individual fairness of nodes and privacy risks of edges.

\noindent
(1) $f_{bias} \Rightarrow d $. When improving the fairness by $\min f_{bias}$, for a node pair $(v_i,v_j)$, $d(v_i,v_j)$ is decreased only if $\mathbf{S}_{i,j}>0$ according to the definition of the Laplacian matrix \cite{PangC17, KangHMT20}. Thus, improving GNN fairness has the following different impacts on node pairs with different hop values. Specifically, we have
\begin{equation}
\label{eq:fbias2d}
    \min f_{bias} \Rightarrow f_{bias} \downarrow \Rightarrow
    \begin{cases}
    d_{1}(v_i,v_j) \downarrow  & \text{if } k=1, \\
    d_{0}(v_i,v_j) \downarrow  & \text{if } k=2, \\
    d_{0}(v_i,v_j) \rightarrow & \text{if } k > 2,        
    \end{cases}
\end{equation}
where the $\downarrow$ and $\rightarrow$ represent obviously decreasing and near-invariant distance values, respectively.

\noindent
(2) $ d \Rightarrow  f_{risk}$. The above value changes impacts the $f_{risk}$ in the following two aspects:

\textit{(a)} Decreased $\overline{d_{1}}$. We obtain it (i.e., $\overline{d_{1}} \downarrow$) since $\forall$ connected (i.e., $1$-hop) node pair, we have $d_{1}(v_i,v_j) \downarrow$, as shown in (\ref{eq:fbias2d}).

\textit{(b)} Near-invariant $\overline{d_{0}}$. According to (\ref{eq:fbias2d}), $2$-hop node pairs are the only affected part in calculating $\overline{d_{0}}$ when improving GNN fairness. 
However, $2$-hop node pairs are only an extremely minor part due to the sparsity of graph data. 
Considering a two-class classification task for simplicity, for a specific node $v_i$, the number of all connected node pairs is 
$\#_{\text{con}}^{\text{all}} = \#_{\text{con}}^{\text{1-hop}} = (n-1)p + (n-1)q = (n-1)(p+q)$,
% Thus, the numbers of all unconnected node pairs (i.e., $\#_{\text{uncon}}^{\text{all}}$) and unconnected $2$-hop node pairs (i.e., $\#_{\text{uncon}}^{\text{2-hop}}$) can be expressed as
% \begin{equation}
% \begin{aligned}
%     \#_{\text{uncon}}^{\text{all}} &= (n-1) - \#_{\text{con}}^{\text{all}} = (n-1)(1-p-q),\\
%     \#_{\text{uncon}}^{\text{2-hop}} &= (n-1)pp+(n-1)pq+(n-1)qp+(n-1)qq\\
%     &=(n-1)(p+q)^2.
% \end{aligned}
% \end{equation}
and the ratio of $2$-hop node pairs in calculating $\overline{d_{0}}$ is
\begin{equation}
\begin{aligned}
    ratio & = \frac{\#_{\text{uncon}}^{\text{2-hop}}}{\#_{\text{uncon}}^{\text{all}}} 
    = \frac{\#_{\text{uncon}}^{\text{2-hop}}}{(n-1) - \#_{\text{con}}^{\text{all}}} \\
    & = \frac{(n-1)pp+(n-1)pq+(n-1)qp+(n-1)qq}{(n-1)(1-p-q)} \\
    & = \frac{(p+q)^2}{1-(p+q)}.
\end{aligned}
\end{equation}
We obtain that the above ratio is almost $0$ due to the sparsity (i.e., $1 > 1-p \gg p$) and homophily (i.e., $p>q$) of graph data. 
Although it shows that $ d_{0}(v_i,v_j) \downarrow$ in (\ref{eq:fbias2d}) when $k=2$, the extremely minor role of $2$-hop node pairs results in a near-invariant $\overline{d_{0}}$ (i.e., $\overline{d_{0}} \rightarrow$).
 
Given the near-invariant $\overline{d_{0}}$ and decreased $\overline{d_{1}}$, we obtain that $\overline{\Delta d} =  \|\overline{d_{0}} - \overline{d_{1}}\|$) is increased, which indicates $f_{risk} \uparrow$.  
\end{proof}

Moreover, ${f_{bias}\downarrow} \Rightarrow {f_{risk}\uparrow}$ is also confirmed by our empirical evaluations in Section \ref{sec:exp:pre}, which indicates the trade-off between fairness and privacy of GNNs.

% For the \textbf{ RQ1}, as shown in Section \ref{sec:exp:pre}, we empirically confirm that 
% % individual fairness is potentially at the cost of the privacy of edges 
% \textbf{(1)} 
% % individual fairness (i.e., reducing $Bias(\mathbf{Y},\mathbf{S})$)
% reduction of GNN performance is not the only adverse effect when debiasing,
% % performance cost is not the only adverse effect of fairness (i.e., reducing $Bias(\mathbf{Y},\mathbf{S})$)
% \textbf{(2)} improving individual fairness of nodes (i.e., \textit{decreasing prediction difference of similar nodes}) harms the privacy of edges (i.e., \textit{increasing leakage risks when inferring edge existence with the principle that nodes with similar predictions tend to be connected}). 

\section{Balancing Fairness and Privacy with limited performance cost}
\label{sec:method}
\begin{figure}
  \centering
  \includegraphics[width=0.8\linewidth]{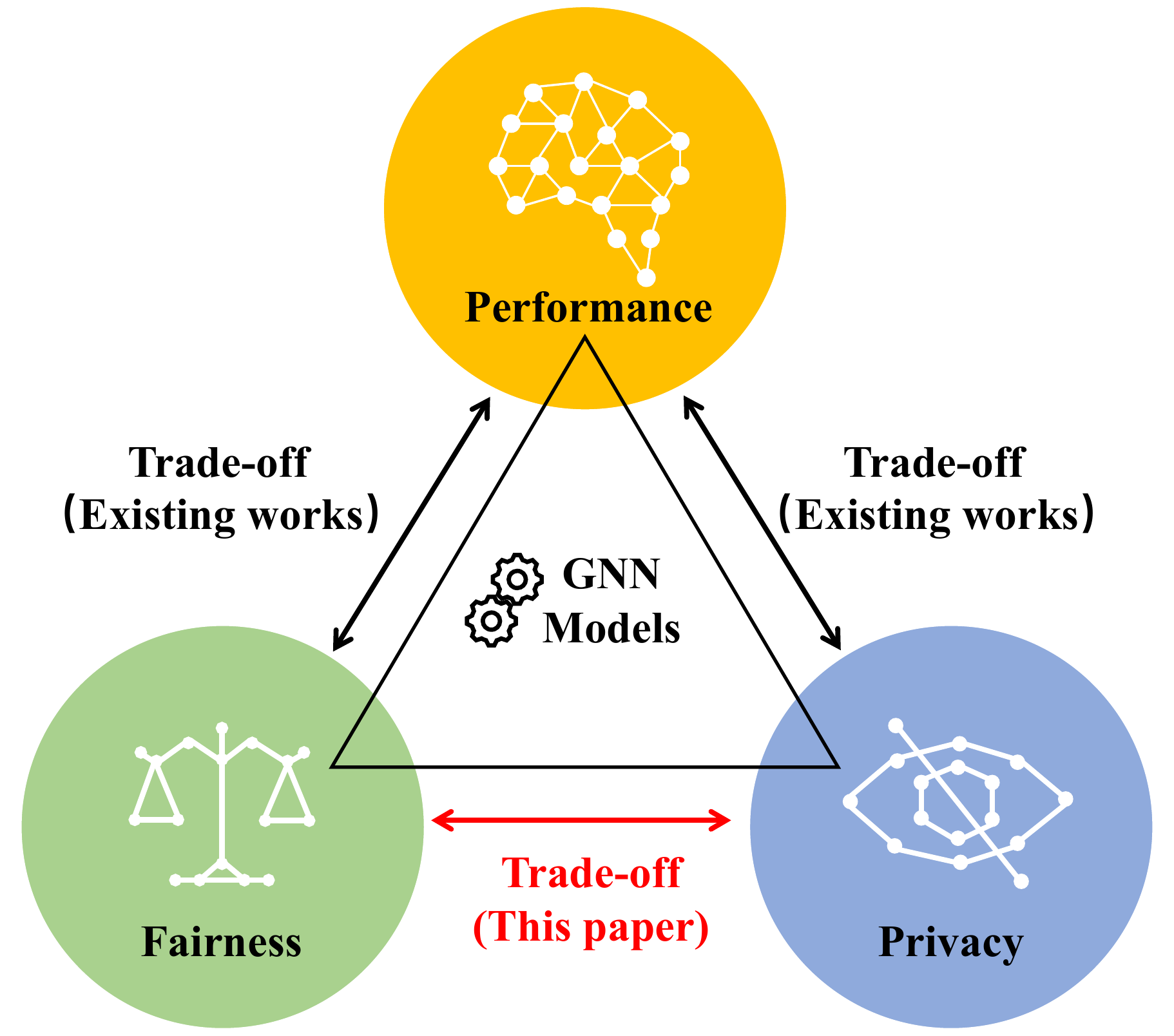}
  \vspace{-5pt}
  \caption{In addition to the existing trade-off between performance and fairness/privacy, the trade-off between fairness and privacy is demonstrated in the context of GNNs.}
  \label{fig:interactions}
\end{figure}

Building trustworthy GNNs requires considering performance and several aspects of trustworthiness simultaneously \cite{abs-2205-07424}, however, taking performance, fairness, and privacy into consideration simultaneously is not trivial \cite{GuZLZRC22}.
This is because, in addition to the finding that both the fairness \cite{DongKTL21} and the privacy \cite{0011L0022} of GNNs are at the cost of performance, our study demonstrates that there is a trade-off between individual fairness of nodes and privacy risks of edges, as shown in Fig. \ref{fig:interactions}.
% First, the existing literature shows that both fairness \cite{DongKTL21} and privacy \cite{0011L0022} of GNNs are at the cost of performance. Furthermore, our empirical study (i.e., Tables \ref{tab:comparison:preli} and \ref{tab:effectiverisk}) shows that promoting fairness (i.e., reducing bias) in GNN predictions results in the increase of privacy risk of edges in the training graph. 
In this paper, we argue that the performance of GNNs lies in their central position to serve users, even if when improving fairness or decreasing privacy risks.
That is, the weight ($\omega$) relations of different goals in comprehensively building trustworthy GNNs are $\omega (Accuracy) > \omega (Fairness) = \omega (Privacy)$.
Thus, to answer \textbf{RQ2}, \textbf{our goal} is to devise a method that can boost \textit{\ul{fairness}} with limited \textit{\ul{performance}} costs and restricted \textit{\ul{privacy}} risks. 

% \subsubsection{Target Model fine-tuning}
\begin{figure*}[t!]
  \centering
  \includegraphics[width=0.95\linewidth]{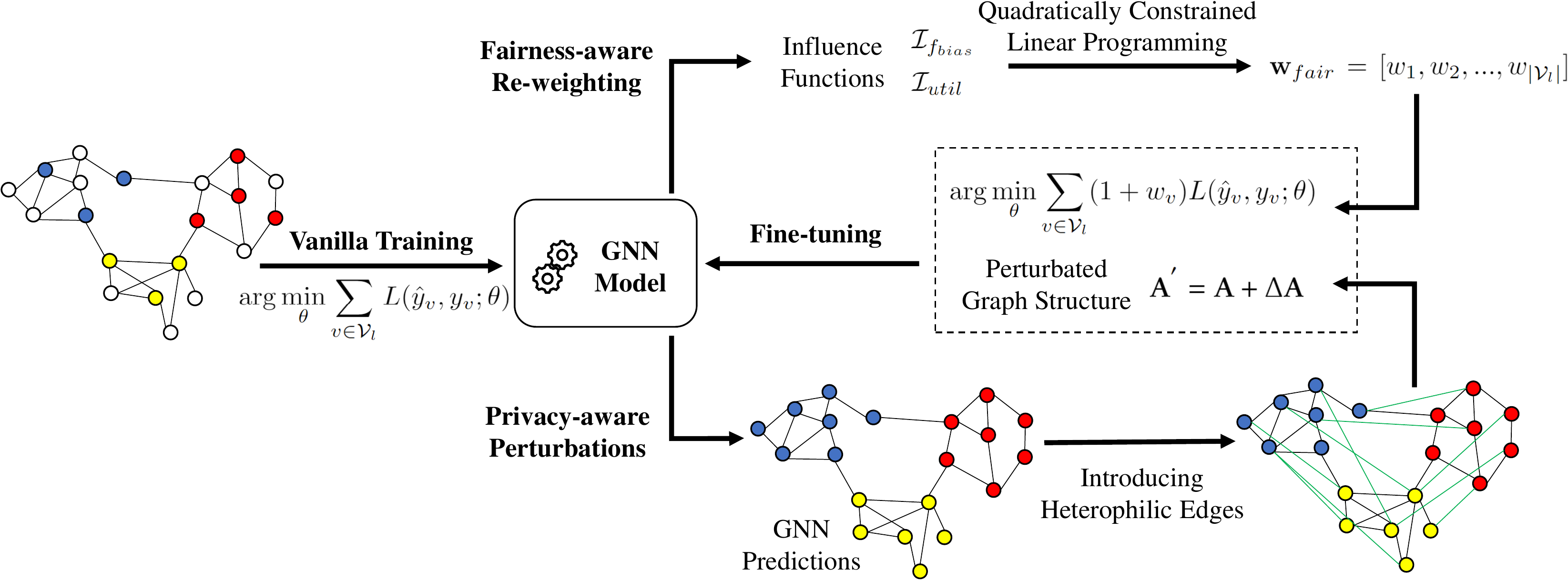}
  % \vspace{-10pt}
  \caption{The framework of privacy-aware perturbations and fairness-aware re-weighting (PPFR) method. After the \textit{vanilla training} of a GNN model, the privacy-aware module introduces heterophily edges to generate the perturbed graph structure, and the fairness-aware reweighting module employs the influence function and quadratically constrained linear programming (QCLP) to obtain fairness-aware weighted loss. Then, the perturbed graph structure and fairness-aware weighted loss are involved in \textit{ fine-tuning} to promote fairness with limited performance cost and restricted privacy risks.}
  \label{fig:framework}
  \vspace{-10pt}
\end{figure*}

To this end, we propose the Privacy-aware Perturbations and Fairness-aware Reweighting (PPFR) method.
As shown in Fig. \ref{fig:framework}, the entire training of the target GNN includes two phases: \textit{vanilla training} and \textit{PPFR fine-tuning}. 
Specifically, we first conduct \textit{vanilla training} to obtain a competent GNN model (\textit{\ul{performance}}).
After that, we continue to train (i.e., \textit{PPFR fine-tuning}) the target GNN with the perturbed graph structure (\textit{\ul{privacy}}) and the weighted loss function derived from fairness-aware weights (\textit{\ul{fairness}}). 
Our PPFR method is \textit{model-agnostic} and can be used after the vanilla training of any GNN models, allowing it to contribute to multi-aspects of trustworthy GNNs in a \textit{plug-and-play} manner. 

The intuition of PPFR is that, given the fixed model architecture, the behaviour of a GNN model is influenced by both the training data and the loss function, where specially designed training strategy can improve its trustworthiness w.r.t. our design goal. 
Considering that incorporating operations to improve fairness or privacy during the early stages of training potentially harms GNN performance, we propose the two-phase framework (i.e., vanilla training and fine-tuning) to make the vanilla training concentrate on improving GNN performance while the fine-tuning focus on enhancing fairness and privacy.
In this section, we first introduce how to evaluate the influence of training samples. Next, we present the fairness-aware reweighting (FR) and privacy-aware perturbations (PP), followed by discussions on our design.

% To answer \textbf{RQ2}, the following in this section presents our method for promoting the fairness of GNN models while restricting edge privacy risks. 
% Specifically, we first introduce how to evaluate the influence of training samples and then employ a correlation index to measure the interaction between fairness and privacy. 
% Finally, we propose a method that can restrict edge privacy risks when promoting GNN fairness.
\subsection{The Influence of Training Samples}
\label{sec:method:influence}
Current works \cite{KangHMT20, 0011L0022} show that GNN behaviours (e.g., bias or privacy leakage) are related to the model parameters, which are affected by GNN training samples. In this section, we first present how the weight of training samples influences the parameters of models, and then discuss its influence on interested functions (e.g., GNN loss function, metric function of bias, or privacy risks).
\subsubsection{Influence of Training Samples on GNN Parameters}
Generally, given a set of labelled nodes in $G$, we can train a GNN model by minimising the following loss function \cite{GNNBook2022}:
\begin{equation}
\label{eq:aveloss}
    \theta^{*}(\mathbf{1}) = \arg \min_{\theta} \sum_{v \in \mathcal{V}_{l} } L (\hat{y}_{v}, y_v;\theta),
\end{equation}
where $y_v$ represents the ground truth label of node $v \in \mathcal{V}_{l}$, and $\hat{y}_v$ indicates the predicted label from the GNN model with parameter $\theta$. 
The $\mathbf{1}$ (i.e., all-one vector) indicates that all nodes in $\mathcal{V}_{l}$ are treated equally when training GNNs.
When changing the weight of training samples, the obtained parameter can be expressed as
\begin{equation}
\label{eq:weightedloss}
    \theta^{*}(\mathbf{1}+\mathbf{w}) = \arg \min_{\theta} \sum_{v \in \mathcal{V}_{l} } (1+w_v)L (\hat{y}_{v}, y_v;\theta),
\end{equation}
where $w_v\in [-1,1]$ and $(1+w_v)$ represents the weight of node $v$ in the loss function when training a GNN model (e.g., $w_v=-1$ indicates leaving node $v$ out of training phase). To estimate $\theta^{*}(\mathbf{1}+\mathbf{w})$ without retraining the GNN model, here we employ the influence function \cite{KohL17,KangZT22} and Taylor expansion to conduct the following first-order approximation:
\begin{equation}
\label{eq:theta1plusw}
    \theta^{*}(\mathbf{1}+\mathbf{w}) \approx \theta^{*}(\mathbf{1})+ \sum_{v \in \mathcal{V}_{l}} w_v \mathcal{I}_{\theta^{*}(\mathbf{1})}(v),
\end{equation}
where $\mathcal{I}_{\theta^{*}(\mathbf{1})}(v) = \frac{d \theta^{*}(\mathbf{1})}{d w_v}|_{w_v=0}$ is the influence function w.r.t. node $v$, and can be calculated as 
\begin{equation}
    \mathcal{I}_{\theta^{*}(\mathbf{1})}(v) = \mathbf{H}_{\theta^{*}(\mathbf{1})}^{-1}\nabla_{\theta}L (\hat{y}_{v}, y_v;\theta^{*}(\mathbf{1})).
\end{equation}
$\mathbf{H}_{\theta^{*}(\mathbf{1})} = \frac{1}{|\mathcal{V}_{l}|} \sum_{v \in \mathcal{V}_{l} } \nabla_{\theta}^{2} L (\hat{y}_{v}, y_v;\theta^{*}(\mathbf{1}))$ is the Hessian matrix of loss function w.r.t $\theta^{*}(\mathbf{1})$.
% and we follow existing works \cite{KohL17, LiL22a} (i.e., using stochastic estimation of the Hessian vector) to effectively calculate influence functions.

\subsubsection{Influence on Interested Functions}
To study the behaviour of GNN models, different $f$ functions (e.g., $f_{bias}$ and $f_{risk}$) have been proposed to evaluate the GNN outputs. 
% For example, when evaluating the individual fairness of GNNs, the existing bias concerning individual fairness \cite{KangHMT20} in a GNN model can be measured by 
% \begin{equation}
% \label{eq:bias}
%     f_{bias}(\theta) = Bias(\mathbf{Y},\mathbf{S}) = Tr(\mathbf{Y}^{T}\mathbf{L_{S}Y}).
% \end{equation}
In this paper, we use the Taylor expansion of $f$ with respect to the parameters $\theta$ and the equation (\ref{eq:theta1plusw}) to calculate the influence of training samples on $f$. Specifically, we assume that $f_{bias}$ is induced by the parameter $\theta$ of the target GNN since the input of $f_{bias}$ is the final prediction of the target GNN. Therefore, the influence of training samples on a interested function $f$ can be expressed as
\begin{equation}
\label{eq:Ifw}
\begin{aligned}
    \mathcal{I}_{f}(\mathbf{w}) & = \frac{df}{d\theta}\frac{d\theta}{d \mathbf{w}} \approx \nabla_{\theta}f(\theta^{*}(\mathbf{1}))^{T} \left[\theta^{*}(\mathbf{1}+\mathbf{w}) - \theta^{*}(\mathbf{1})\right]\\
    & \approx \nabla_{\theta}f(\theta^{*}(\mathbf{1}))^{T} \left[\sum_{v \in \mathcal{V}_{l}} w_v\mathcal{I}_{\theta^{*}(\mathbf{1})}(v)\right].
    % \\
    % &=\nabla_{\theta}f(\theta^{*}(\mathbf{1}))^{T}\mathbf{H}_{\theta^{*}(\mathbf{1})}^{-1} \left[\sum_{v \in \mathcal{V}_{l}} w_v\nabla_{\theta}L (\hat{y}_{v}, y_v;\theta^{*}(\mathbf{1}))\right]
\end{aligned}
\end{equation}
% \vspace{-5pt}
Note that any derivable function that takes the GNN prediction $\mathbf{Y}$ as input can be considered as $f$. For example, $f$ can be instantiated as the loss function that concerns the utility (i.e., performance) of the target model, i.e., 
\begin{equation}
    \mathcal{I}_{util}(\mathbf{w}) =  \sum_{v \in \mathcal{V}_{l}} \nabla_{\theta}L (\hat{y}_{v}, y_v;\theta^{*}(\mathbf{1}))^{T} \left[\sum_{v \in \mathcal{V}_{l}} w_v\mathcal{I}_{\theta^{*}(\mathbf{1})}(v)\right].
\end{equation}
% In this case, $\mathcal{I}_{util}(\mathbf{w})$ focuses on how reweighting training samples impacts the loss (i.e., utility) of a given model. 
Moreover, when using the bias-related function $f_{bias}(\theta)$ and the privacy risk-related function $f_{risk}(\theta)$, we can obtain the following influence of training samples on GNN fairness and privacy risks:
% % $\mathcal{I}_{f_{bias}}(\mathbf{w})$ and $\mathcal{I}_{f_{risk}}(\mathbf{w})$, respectively,.
% According to Equation (\ref{eq:Ifw}), the influence on fairness can be expressed as $\mathcal{I}_{f_{bias}}(\mathbf{w}) =
%     \nabla_{\theta}f_{bias}(\theta^{*}(\mathbf{1}))^{T} \left[\sum_{v \in \mathcal{V}_{l}} w_v\mathcal{I}_{\theta^{*}(\mathbf{1})}(v)\right]$,
% and the influence on the risk of privacy can be calculated as $\mathcal{I}_{f_{risk}}(\mathbf{w}) =
%     \nabla_{\theta}f_{risk}(\theta^{*}(\mathbf{1}))^{T} \left[\sum_{v \in \mathcal{V}_{l}} w_v\mathcal{I}_{\theta^{*}(\mathbf{1})}(v)\right]$.
\begin{equation}
\begin{aligned}
    \mathcal{I}_{f_{bias}}(\mathbf{w}) &=
    \nabla_{\theta}f_{bias}(\theta^{*}(\mathbf{1}))^{T} \left[\sum_{v \in \mathcal{V}_{l}} w_v\mathcal{I}_{\theta^{*}(\mathbf{1})}(v)\right], \\
    \mathcal{I}_{f_{risk}}(\mathbf{w}) &=
    \nabla_{\theta}f_{risk}(\theta^{*}(\mathbf{1}))^{T} \left[\sum_{v \in \mathcal{V}_{l}} w_v\mathcal{I}_{\theta^{*}(\mathbf{1})}(v)\right].
\end{aligned}
\end{equation}
% where the details of $f_{risk}$ can be found in the Appendix \ref{sec:appendix:f_risk_effective}.
In this paper, we use $\mathbf{w}_v$ to denote an all-zero vector, except $w_v$ is -1. Thus, $\mathcal{I}_{f}(\mathbf{w}_v)$ represents leaving node $v$ out of the training of a GNN model, that is, $\mathcal{I}_{f}(\mathbf{w}_v) =-\nabla_{\theta}f(\theta^{*}(\mathbf{1}))^{T} \mathbf{H}_{\theta^{*}(\mathbf{1})}^{-1} \nabla_{\theta}L (\hat{y}_{v}, y_v;\theta^{*}(\mathbf{1}))$.

% \noindent
\textit{Remarks.} 
% Should we consider edges when calculating influence functions? 
Edges between nodes break the IID assumptions among nodes when calculating influence functions on graph data, and a recent study on GNNs \cite{DongWMLL2022} proposes to calculate node influence by considering the existence of edges in graph data. 
% However, its experimental evaluations confirm that edge influence can only cause extremely limited estimation degradation (i.e., the correlation between estimated and actual influence drops from 0.97 to 0.91). 
However, comparing with retaining GNNs by equation (\ref{eq:weightedloss}), its experimental evaluations confirm that edges can only cause extremely limited estimation degradation on $\theta^{*}(\mathbf{1}+\mathbf{w})$.
As a result, we use the calculation in this section for efficiency without sacrificing the accuracy when estimating $\theta^{*}(\mathbf{1}+\mathbf{w})$.

\subsection{Boosting Fairness with Restricted Privacy Risks}
\subsubsection{Fairness-aware Re-weighting (FR)}
\label{sec:method:ppfr:fr}
According to a recent GNN fairness study \cite{DongWMLL2022}, bias in a GNN model can be interpreted by node attribution, and fairness can be improved by eliminating harmful training nodes. 
Inspired by this work, unlike directly deleting training nodes and then training target GNNs from scratch, we propose to reweight the training nodes in the loss function of a GNN after it vanilla training.
% , which boosts fairness while helps reducing performance cost. 
Specifically, the weight can be obtained by the following Quadratically Constrained Linear Programming (QCLP),
\begin{equation}
\label{equ:wfair}
\begin{array}{ll}
% \min &  \sum\limits_{v \in \mathcal{V}_{l}} w_v (\mathcal{I}_{f_{bias}}(\mathbf{w}_v) + \gamma \mathcal{I}_{f_{risk}}(\mathbf{w}_v)) \\
\min_{w} &  \sum\limits_{v \in \mathcal{V}_{l}} w_v \mathcal{I}_{f_{bias}}(\mathbf{w}_v) \\
\mathnormal{ s.t. } 
& \sum\limits_{v \in \mathcal{V}_{l}} w_v^{2} \leq \alpha |\mathcal{V}_{l}|, \\
& \sum\limits_{v \in \mathcal{V}_{l}} w_v  \mathcal{I}_{f_{util}}(\mathbf{w}_v) \leq \beta \sum \mathcal{I}_{f_{util}}^{+}(\mathbf{w}_v),\\
& -1 \leq w_v \leq 1.
\end{array}
\end{equation}
In (\ref{equ:wfair}), $w_v$ is the variable and the objective tends to minimise the total bias existing in GNN predictions. The first constraint is designed to control the degree of reweighting. The second constraint represents that the obtained weights only cost limited the utility of the model, where $\mathcal{I}_{f_{util}}^{+}(\mathbf{w}_v)$ indicates $\mathcal{I}_{f_{util}}(\mathbf{w}_v)$ with positive value. $\alpha$ and $\beta$ are hyperparameters. 

Note that our QCLP optimisation (\ref{equ:wfair}) is different from the training scheme Linear Programming (LP) in \cite{LiL22a} since our QCLP has different constraints and a wider search space, which leads to better debiasing effects and lower performance cost. 
After solving this QCLP with the Gurobi optimiser \cite{Gurobi}, the obtained weight $\mathbf{w}_{fair}=[w_1, w_2, ..., w_{|\mathcal{V}_{l}|}]$ can be used to fine-tune the target GNN model with weighted loss in the equation (\ref{eq:weightedloss}).

\noindent
\textbf{Why not considering ${\mathcal{I}_{f_{risk}}(\mathbf{w}_v)}$ in QCLP?} 
Given the proposed re-weighting method, a straightforward approach of taking privacy and fairness into account simultaneously is that involving an item that concerns $\mathcal{I}_{f_{risk}}$ into the QCLP (e.g., playing a role as a constraint or part of the objective). 
However, the negative correlation between $\mathcal{I}_{f_{bias}}$ and $\mathcal{I}_{f_{risk}}$ leads to the weak effectiveness of this straightforward method.

To verify this method, we first calculate the influence on $f_{bias}$ and $f_{risk}$, and concatenate all $\mathcal{I}_{f}(\mathbf{w}_v)$ $(v \in \mathcal{V}_{l})$ together to obtain the influence vector ${\mathcal{I}_{f_{bias/risk}}\in \mathbb{R}^{1 \times |\mathcal{V}_{l}|}}$.
Next, we employ the Pearson coefficient $r$ between $\mathcal{I}_{f_{bias}}$ and $\mathcal{I}_{f_{risk}}$ to measure the interactions between fairness and privacy. Specifically, 
$r=\mathtt{Pearson}(\mathcal{I}_{f_{bias}}, \mathcal{I}_{f_{risk}})$, 
where $-1 \leq r \leq 1$. In the context of measuring interactions between fairness and privacy, $r = 1$ represents a mutual promotion, while $r = -1$ indicates that there is a conflict between them. 

% As shown in Table \ref{tab:correlation}, a strong negative correlation in some cases (e.g., Cora, GCN) suggests that directly involving $\mathcal{I}_{f_{risk}}(\mathbf{w}_v)$ in QCLP is not always reasonable.
As shown in Table \ref{tab:correlation}, $r < 0.3$ indicates the inconformity \cite{akoglu2018user} between $\mathcal{I}_{f_{bias}}$ and $\mathcal{I}_{f_{risk}}$, which indicates that it is difficult to obtain effective node weights in equation (\ref{equ:wfair}). 
Moreover, our empirical studies also confirm the weak effectiveness of this straightforward method.
% Note that the $f_{risk}$ in $r$ is instantiated as that in Appendix \ref{sec:appendix:f_risk_effective} for better estimation accuracy.
Note that $f_{risk}$ is instantiated as $f_{risk}(\theta) = \frac{2\|\overline{d_{0}} - \overline{d_{1}}\|}{var(d_{0})+var(d_{1})}$ for better estimation accuracy.
% ,
% whose effectiveness has been empirically verified in Fig. \ref{fig:f_risk_effective}.

\begin{table}
% \begin{table}[ht!]
% \renewcommand\arraystretch{1.15}
\centering
% \vspace{-18pt}
\caption{Correlation $r$ between $\mathcal{I}_{f_{bias}}$ and $\mathcal{I}_{f_{risk}}$.}
\label{tab:correlation}
\scalebox{1}{
\begin{tabular}{|c|c|c|c|}
\hline
\multicolumn{1}{|l|}{} & GCN   & GAT   & GraphSage \\ \hline
Cora                   & -0.66 & 0.26  & 0.87      \\ \hline
Citeseer               & -0.51 & -0.96 & -0.24     \\ \hline
Pubmed                 & -0.41 & 0.52  & 0.25      \\ \hline
\end{tabular}
}
\vspace{-10pt}
% \end{table}
\end{table}

\subsubsection{Privacy-aware Perturbations (PP)}
\label{sec:method:ppfr:pp}
% In this paper, we take the most practical link stealing attacks (i.e., Attack-0 in \cite{HeJ0G021}, black-box setting) as our object. 
% In this paper, we target a practical (black-box) privacy attack on edges, i.e., link stealing attacks \cite{HeJ0G021} as introduced in Section \ref{sec:problem}. 
Although our analysis in Section \ref{sec:tradeoff} confirms the adverse effect (increased privacy risks of edges) of improving individual fairness of nodes, it also indicates the opportunity to navigate the fairness and privacy, as they focus on different spaces of node pairs (e.g., improving individual fairness leads to reduced $\overline{d_{1}}$, while privacy risks can be restricted by reducing $\overline{d_{0}}$). 
Considering the inconformity (i.e., $r < 0.3$ in Table \ref{tab:correlation}) between fairness and privacy in the weight space of the loss function, we propose to restrict the privacy risks of edges by modifying the graph data when improving fairness with the reweighted loss function. 
In this section, we first present a risk modeling to reveal potential factors that contribute to the privacy risk of edges, followed by showing our privacy-aware perturbation method.

\noindent
\textbf{Modeling privacy risks of edges.} 
To dig out factors that contributes to $f_{risk}$, we implement a preliminary analysis on synthetic $G$ whose nodes can be categorised into two classes (i.e., class-0 and class-1). 
Instead of directly analysing privacy risks in the link score space, we will study it in the embedding space and show how the existence of an edge impacts the learnt embedding. 
In the embedding space, a well-trained GNN model clusters similar nodes together so that they have similar predictions. 
According to previous studies \cite{PanHLJYZ18, DongLJL22}, we assume that the learnt node embedding follows the normal distribution. 
For simplicity, we employ the left normalisation $\hat{\mathbf{A}}=\tilde{\mathbf{D}}^{-1}(\mathbf{~A}+\mathbf{I})$ in GCN models and consider the binary node classification task. 
Assuming that $\mu_i$ and $\sigma$ indicate the mean and standard deviation of the node embedding from class $y_i$ ($i=0,1$) at the $t$-th layer, we obtain the learnt embedding $\mathbf{E}^{(t, y_i)} \sim N(\mu_i, {\sigma}^2)$.
% (assuming $\sigma =\sigma_0 = \sigma_1$ for simplicity). 

In this paper, we focus on analysing the intra-class node pairs (i.e., nodes with the same label), since they are the majority of all node pairs in binary classifications \cite{ZhangWWYXPY23}.
In a GCN model, the edges are involved in the one-hop mean-aggregation operation, i.e., $\hat{\mathbf{A}} \mathbf{E}$.
From the view of an individual node $v_i$, the one-hop mean aggregation operation is
\begin{equation}
    {[\hat{\mathbf{A}} \mathbf{E}^{(t)}]_{i} = \frac{1}{d_i+1} (\mathbf{E}^{(t)}_i +  \sum_{ \substack{\scriptscriptstyle v_j \in \mathcal{N}(v_i)\\ m_j=0 }}  \mathbf{E}^{(t,y_0)}_j +  \sum_{ \substack{\scriptscriptstyle v_j \in \mathcal{N}(v_i)\\ m_j=1 }}  \mathbf{E}^{(t,y_1)}_j)},
\end{equation}
where $\mathcal{N}(v_i)$ represents the set of neighbour nodes of $v_i$, and ${{m}_{i}=1}$ indicates $v_{i}$ is from class $y_1$, otherwise ${{m}_{n}=0}$. Given ${\mathbf{m}=[m_1,...,m_{|\mathcal{V}|}]}$, ${\mathbf{E}^{(t,y_1)} = diag(\mathbf{m})\mathbf{E}^{(t)}}$, ${\mathbf{E}^{(t,y_0)} = (\mathbf{I}-diag(\mathbf{m}))\mathbf{E}^{(t)}}$, and ${\mathbf{E}^{(t)}=\mathbf{E}^{(t,y_0)}+\mathbf{E}^{(t,y_1)}}$.
Thus, we have ${\sum_{ \substack{v_j \in \mathcal{N}(v_i)\\ m_j=0 }}  \mathbf{E}_j^{(t,y_0)} \sim N(d_i^{y_0} \mu_0, d_i^{y_0} {\sigma}^2)}$ and ${\sum_{ \substack{v_j \in \mathcal{N}(v_i)\\ m_j=1 }} \mathbf{E}_j^{(t,y_1)} \sim N(d_i^{y_1} \mu_1, d_i^{y_1} {\sigma}^2)}$, where ${d_i^{y_0/y_1}}$ denotes the number of nodes in ${\mathbf{E}^{y_{0}/y_{1}}}$ and ${d_i = d_i^{y_0}+d_i^{y_1}}$ indicates the degree of $v_i$.
Without loss of generality, we assume that $v_i$ and $v_j$ in the node pair $(v_i,v_j)$ come from class 0.

As shown in the following, we analyse how the existence of an edge influences the privacy risk $f_{risk}$. 
Specifically, the distance sensitivity in the embedding space can be calculated as the difference between the cases where $v_i$ and $v_j$ are connected or unconnected.

\noindent
\textit{Case 0}: When $v_i$ and $v_j$ are unconnected, ${\left(\hat{\mathbf{A}} \mathbf{E}^{(t)}\right)_{i}^{0}}$ and ${\left(\hat{\mathbf{A}} \mathbf{E} ^{(t)}\right)_{j}^{0}}$ can be approximately expressed as
\begin{equation}
\label{eq:case0}
\begin{aligned}
    \left(\hat{\mathbf{A}} \mathbf{E} ^{(t)}\right)_{i}^{0} \approx & \frac{1}{d_i+1} \left(\mathbf{E}^{(t)}_i + d_i^{y_0} \mu_0 + d_i^{y_1} \mu_1 \right),\\
    \left(\hat{\mathbf{A}} \mathbf{E} ^{(t)}\right)_{j}^{0} \approx & \frac{1}{d_j+1} \left(\mathbf{E}^{(t)}_j + d_j^{y_0} \mu_0 + d_j^{y_1} \mu_1 \right).
\end{aligned}
\end{equation}
% \begin{equation}
% \label{eq:case0}
%     \left(\hat{\mathbf{A}} \mathbf{E} ^{(t)}\right)_{i}^{0} \approx \frac{1}{d_i+1} \left(\mathbf{E}^{(t)}_i + d_i^{y_0} \mu_0 + d_i^{y_1} \mu_1 \right),    \left(\hat{\mathbf{A}} \mathbf{E} ^{(t)}\right)_{j}^{0} \approx \frac{1}{d_j+1} \left(\mathbf{E}^{(t)}_j + d_j^{y_0} \mu_0 + d_j^{y_1} \mu_1 \right).
% \end{equation}
\noindent
\textit{Case 1}:
When $v_i$ and $v_j$ are connected, $\left(\hat{\mathbf{A}} \mathbf{E}^{(t)}\right)_{i}^{1}$ and $\left(\hat{\mathbf{A}} \mathbf{E} ^{(t)}\right)_{j}^{1}$ can be approximately expressed as
\begin{equation}
\label{eq:case1}
\begin{aligned}
    \left(\hat{\mathbf{A}} \mathbf{E} ^{(t)}\right)_{i}^{1} \approx & \frac{1}{d_i+2} \left(\mathbf{E}^{(t)}_i + \mathbf{E}^{(t)}_j + d_i^{y_0} \mu_0 + d_i^{y_1} \mu_1 \right),\\
    \left(\hat{\mathbf{A}} \mathbf{E} ^{(t)}\right)_{j}^{1} \approx & \frac{1}{d_j+2} \left(\mathbf{E}^{(t)}_j + \mathbf{E}^{(t)}_i + d_j^{y_0} \mu_0 + d_j^{y_1} \mu_1 \right).
\end{aligned}
\end{equation}
% \begin{equation}
% \label{eq:case1}
%     \left(\hat{\mathbf{A}} \mathbf{E} ^{(t)}\right)_{i}^{1} \approx \frac{1}{d_i+2} \left(\mathbf{E}^{(t)}_i + \mathbf{E}^{(t)}_j + d_i^{y_0} \mu_0 + d_i^{y_1} \mu_1 \right), \left(\hat{\mathbf{A}} \mathbf{E} ^{(t)}\right)_{j}^{1} \approx \frac{1}{d_j+2} \left(\mathbf{E}^{(t)}_j + \mathbf{E}^{(t)}_i + d_j^{y_0} \mu_0 + d_j^{y_1} \mu_1 \right).
% \end{equation}
Given (\ref{eq:case0}) and (\ref{eq:case1}), the distance of $v_i$ and $v_j$ in the embedding space is calculated as 
% \begin{equation}
% \begin{aligned}
%     d_{0}(v_i,v_j) & = \left(\hat{\mathbf{A}} \mathbf{E}^{(t)}\right)_{i}^{0} - \left(\hat{\mathbf{A}} \mathbf{E} ^{(t)}\right)_{j}^{0},\\
%     d_{1}(v_i,v_j) & = \left(\hat{\mathbf{A}} \mathbf{E}^{(t)}\right)_{i}^{1} - \left(\hat{\mathbf{A}} \mathbf{E} ^{(t)}\right)_{j}^{1}.
% \end{aligned}
% \end{equation}
\begin{equation}
\begin{aligned}
    d_{0}(v_i,v_j) &= \left(\hat{\mathbf{A}} \mathbf{E}^{(t)}\right)_{i}^{0} - \left(\hat{\mathbf{A}} \mathbf{E} ^{(t)}\right)_{j}^{0},\\
    d_{1}(v_i,v_j) &= \left(\hat{\mathbf{A}} \mathbf{E}^{(t)}\right)_{i}^{1} - \left(\hat{\mathbf{A}} \mathbf{E} ^{(t)}\right)_{j}^{1}.
\end{aligned}
\end{equation}
Thus, the sensitivity of $d(v_i,v_j)$ with respect to the existence of edge $e_{ij}$ is 
% \begin{equation}
% \begin{aligned}
%     \Delta d(v_i,v_j) = & \|d_{0}(v_i,v_j) - d_{1}(v_i,v_j)\| \\
%     = &  \left\| \frac{\left(\hat{\mathbf{A}} \mathbf{E} ^{(t)}\right)_{i}^{0} - \mathbf{E}^{(t)}_j }{d_i+2}  - \frac{\left(\hat{\mathbf{A}} \mathbf{E} ^{(t)}\right)_{j}^{0} - \mathbf{E}^{(t)}_i}{d_j+2}  \right\|,
%     % \\
%     % & \in [0 \pm (\frac{1}{d_i+} + \frac{1}{d_j+1}) \delta_0 \mathbf{J}]
% \end{aligned}
% \end{equation}
\begin{equation}
\begin{aligned}
    \Delta d(v_i,v_j) & =  \|d_{0}(v_i,v_j) - d_{1}(v_i,v_j)\| 
    \\ & =   \left\| \frac{\left(\hat{\mathbf{A}} \mathbf{E} ^{(t)}\right)_{i}^{0} - \mathbf{E}^{(t)}_j }{d_i+2}  - \frac{\left(\hat{\mathbf{A}} \mathbf{E} ^{(t)}\right)_{j}^{0} - \mathbf{E}^{(t)}_i}{d_j+2}  \right\|.
    % \\
    % & \in [0 \pm (\frac{1}{d_i+} + \frac{1}{d_j+1}) \delta_0 \mathbf{J}]
\end{aligned}
\end{equation}
Considering that we have
% $\mathbb{E}\left[\left(\hat{\mathbf{A}} \mathbf{E}^{(t)}\right)_{i}^{0} -\mathbf{E}^{(t)}_j \right] = \frac{(1+d_i^{y_0})\mu_0+d_i^{y_1}\mu_1}{d_i+1} - \mu_0 =\frac{d_i^{y_1}(\mu_1-\mu_0)}{d_i+1}$, $\mathbb{E}\left[\left(\hat{\mathbf{A}} \mathbf{E} ^{(t)}\right)_{j}^{0} -\mathbf{E}^{(t)}_i \right] = \frac{(1+d_j^{y_0})\mu_0+d_j^{y_1}\mu_1}{d_j+1} - \mu_0=\frac{d_j^{y_1}(\mu_1-\mu_0)}{d_j+1}$, 
\begin{equation}
\begin{aligned}
    \mathbb{E}\left[\left(\hat{\mathbf{A}} \mathbf{E}^{(t)}\right)_{i}^{0} -\mathbf{E}^{(t)}_j \right] &= \frac{(1+d_i^{y_0})\mu_0+d_i^{y_1}\mu_1}{d_i+1} - \mu_0 \\ &=\frac{d_i^{y_1}(\mu_1-\mu_0)}{d_i+1}, \\
    \mathbb{E}\left[\left(\hat{\mathbf{A}} \mathbf{E} ^{(t)}\right)_{j}^{0} -\mathbf{E}^{(t)}_i \right] &= \frac{(1+d_j^{y_0})\mu_0+d_j^{y_1}\mu_1}{d_j+1} - \mu_0\\
    &=\frac{d_j^{y_1}(\mu_1-\mu_0)}{d_j+1}
\end{aligned}
\end{equation}
the expectation of $\Delta d(v_i,v_j)$  is
\begin{equation}
\label{eq:frisk:factors}
    \mathbb{E}\left[ \Delta d(v_i,v_j) \right] = \left\|(\mu_1-\mu_0)\delta\right\|,
\end{equation}
where $\delta =\frac{d_i^{y_1}}{(d_i+1)(d_i+2)} -  \frac{d_j^{y_1}}{(d_j+1)(d_j+2)}$.  

% \noindent
% \textbf{Discussions.} 
\vspace{3pt}
The existing study on GNNs \cite{DongLJL22} has demonstrated that preliminary analysis on synthetic datasets helps to devise effective methods to build trustworthy GNNs.
% Although several assumptions (i.e., Gaussian distribution) are used in our analysis, 
The derived result reveals potential factors that contribute to $f_{risk}$. 
% which inspire us in understanding the potential connections between node fairness and edge privacy and devising effective methods to enhancing GNN trustworthiness. 
% which inspire us in devising the following method to enhancing GNN privacy. 
For example, the value of $\|\mu_1-\mu_0\|$ reflects the ability of GNNs to distinguish two classes of nodes, (\ref{eq:frisk:factors}) indicates that a GNN model with higher performance (i.e., owning a larger $\|\mu_1-\mu_0\|$) has higher edge leakage risks due to the homophily of graph data (i.e., connected nodes are more likely to have similar attributes and the same label). 
% Another item $\delta$ implies that, for nodes with similar degree values, the heterophilic difference between nodes is positively correlated with the privacy risk of edges. 

% Our privacy-aware perturbation method in (\ref{eq:riskpeturb}) helps reducing $f_{risk}$ by decreasing both $\left\|\mu_1-\mu_0\right\|$ and $\left\|\delta\right\|$.
% According to (\ref{eq:frisk:factors}), it suggests that decreasing the class distinguishability (i.e., $\|\mu_1-\mu_0\|$) helps reducing $f_{risk}$.

\noindent
\textbf{Modifying graph structure}. 
The modification strategy is led by the following two insights derived from our theoretical analysis.
(1) According to Section \ref{sec:tradeoff}, reducing $\overline{d_{0}}$ helps to restrict the increased privacy risks of edges (i.e., $ f_{risk}=\|\overline{d_{0}} - \overline{d_{1}}\|$) where $d_{0}(v_i,v_j)$ is reduced. 
(2) Our analysis of privacy risk in (\ref{eq:frisk:factors}) indicates that reducing the node distance between different classes (i.e., $\|\mu_1-\mu_0\|$) benefits privacy of edges in GNNs.
Therefore, we proposed to introduce heterophilic noisy edges into the original graph structure $\mathbf{A}$, which is consistent with the above two insights. 

% Following the above insights, during the fine-tuning phase, we propose to introduce heterophilic edges into $\mathbf{A}$ based on the label prediction $\mathbf{Y}$ of the vanilla trained GNN.  

Concretely, the modified graph structure $\mathbf{A}^{'}$ is obtained by adding privacy-aware perturbations $\Delta \mathbf{A}$ to $\mathbf{A}$, i.e.,
\begin{equation}
\label{eq:riskpeturb}
    \mathbf{A}^{'} = \mathbf{A} + \Delta \mathbf{A}.
\end{equation}  
$\Delta \mathbf{A}$ is first initiated as an all-zero matrix, the generation of nonzero elements is guided by the well-train GNN obtained from the vanilla training phase, which is expected to have excellent performance (e.g., high accuracy) in node prediction.
Taking node $v_i \in \mathcal{V}$ as an example, we use $\mathbf{Y}_{i}=GNN(v_i)$ to indicate the predicted label of $v_i$, and its unconnected node set is denoted by $\mathcal{V}\backslash\mathcal{N}(i)$.  
We randomly select nodes (e.g., $v_j$) with different labels (i.e., $\mathbf{Y}_{i} \neq \mathbf{Y}_{j}$) from $\mathcal{V}\backslash\mathcal{N}(i)$ and let $\Delta \mathbf{A}_{i,j} =1$. 
Moreover, we use a hype-parameter $\gamma$ to control the ratio of noisy edges, i.e., $ |\mathcal{N}(i)^{\Delta}| = \gamma |\mathcal{N}(i)|$, where $\mathcal{N}(i)^{\Delta}$ represents the neighbour set of $v_i$ in $\Delta \mathbf{A}$.
% the GNN prediction on $v \in \{v_{i_1},..., v_{i_k}\}$ is different from that on $v_i$, $k = \gamma |\mathcal{N}(v_i)|$ and $\gamma$ is a hype-parameter. 
% When employing $\mathbf{A}'$ in the fine-tuning of target GNNs, in addition to reducing heterophilic differences between nodes of the same class (i.e., $\delta$ in (\ref{eq:frisk:factors})), the introduction of heterophilic edges can help close the distance of nodes across classes (i.e., $\mu_0 - \mu_1$ in $\mathbb{E}\left[ \Delta d(v_i,v_j) \right]$). 
% Intuitively, introducing heterophilic noisy edges contributes to reducing $f_{risk} = \|\overline{d_{0}} - \overline{d_{1}}\|$ by decreasing the $\overline{d_{0}}$. 

% Its connections with items in (\ref{eq:frisk:factors}) can be found in Appendix \ref{sec:appendix:proof}.

% \subsection{Discussion}
% \label{sec:method:discuss}

% \textbf{Advantages and Limitations.} Our PPFR method is model-agnostic and can be used after the vanilla training of any GNN models, allowing it to boost fairness with limited performance costs and restricted privacy risks in a plug-and-play manner. 
% However, the time complexity of calculating influence functions is proportional to the size of the parameters in GNNs \cite{DongWMLL2022}, which potentially limits its application to large datasets and models. 
% In this paper, we follow existing works \cite{KohL17, LiL22a} (i.e., using stochastic estimation of the Hessian vector) to effectively calculate influence functions.

\section{Experiments}
\label{sec:exp}
The experiments in this section aim to evaluate the answers to our two research questions. We first present the adverse effect on privacy risks of improving fairness, followed by demonstrating the outperformance of our PPFR method.

\subsection{Adverse effect on privacy risks of improving fairness}
\label{sec:exp:pre}
Introducing fairness regularisation (i.e., $f_{bias}$) into the loss function of a GNN model is effective in reducing bias \cite{KangHMT20}, while it potentially affects the privacy risk of edges (i.e., $f_{risk}$).
To verify the theoretical conclusion in Section \ref{sec:tradeoff}, this section assesses the effect of promoting individual fairness of nodes on the risk of edge leakage. 

% \noindent
\subsubsection{Preliminary Study Settings}  
The experimental settings in this section include datasets, models, and evaluation metrics.

\noindent
\textbf{Datasets and Models.}
In our experiments, we use Cora \cite{KipfW17}, Citeseer \cite{KipfW17}, and Pubmed \cite{KipfW17} datasets, which are commonly used in evaluating GNN in node classification. 
The models selected for this study are GCN \cite{KipfW17} with 16 hidden layers that employ ReLU and softmax activation functions. We use accuracy as the metric to evaluate the performance of GCNs.

% \begin{table}[ht!]
\begin{table}
% \renewcommand\arraystretch{1.15}
% \vspace{-0.69cm}
\centering
\caption{Comparison of the accuracy and bias of GCN models. 
% In this table, ``Vanilla'' represents the generic training of GNNs, and ``Reg'' indicates that the fairness regularisation is introduced into the loss function of GNNs during their vanilla training.
In this table, ``vanilla''/``Reg'' indicates the GCN trained without/with fairness regularisation (i.e., $f_{bias}$).
}
\vspace{-5pt}
\label{tab:pre:performance_change}
% \scalebox{0.82}{
\begin{tabular}{cccc}
\hline
Datasets                  & Methods & Acc$\uparrow$   & Bias$\downarrow$   \\ \hline
\multirow{2}{*}{Cora}     & Vanilla & 86.12 & 0.0766 \\ \cline{2-4} 
                          & Reg     & 85.38 & 0.0494 \\ \hline
\multirow{2}{*}{Citeseer} & Vanilla & 63.66 & 0.0445 \\ \cline{2-4} 
                          & Reg     & 63.11 & 0.0301 \\ \hline
\multirow{2}{*}{Pubmed}   & Vanilla & 85.37 & 0.0706 \\ \cline{2-4} 
                          & Reg     & 83.37 & 0.0108 \\ \hline
\end{tabular}
% }
% \begin{threeparttable}
% \begin{tablenotes}
%     \footnotesize \item[*]  In this table, ``Vanilla" represents the generic training of GNNs, and ``Reg'' indicates that the fairness regularisation is introduced into the loss function of GNNs during their vanilla training.
% \end{tablenotes}
% \end{threeparttable}
\vspace{-10pt}
\end{table}

\noindent
\textbf{Metrics.} The trustworthiness evaluation metrics in this paper includes the fairness and privacy aspects of GNNs.
\\
\noindent
\textbf{(1)}-\textit{fairness} ($f_{bias})$:
Following previous studies \cite{KangHMT20,ChangS21}, we combine the $Bias(\mathbf{Y},\mathbf{S})$ and original loss function together in the training phase to promote fairness. The similarity matrix $\mathbf{S}$ is defined as the Jaccard index and is derived from the graph structure \cite{KangHMT20}, and the bias is measured by $Bias(\mathbf{Y},\mathbf{S})$. The lower the bias value, the fairer the GNN prediction $\mathbf{Y}$. 
\\
\noindent
\textbf{(2)}-\textit{privacy} (AUC): In this paper, we assume that attackers can only query target models to obtain node predictions from target GNNs, which is the most practical link stealing attack (i.e., Attack-0 in \cite{HeJ0G021}). Based on prediction similarity, attackers attempt to infer the existence of an edge in any pair of nodes in the training graph. The edge leakage risk is measured by the AUC score, where larger values indicate higher privacy risks. Following a previous study \cite{HeJ0G021}, the prediction similarity is calculated using Cosine, Euclidean, Correlation, Chebyshev, Braycurtis, Canberra, Cityblock, and Sqeuclidean distances.

% \end{table}
% \begin{figure}
% % \begin{figure*}[ht!]
%   % \vspace{-125pt}
%   \centering
%   \includegraphics[width=0.8\linewidth]{}
%   \vspace{-5pt}
%   \caption{
%   Changes of edges' privacy risks. The smaller AUC $\Leftrightarrow$ the better privacy.
%   % Privacy risks of edges before and after enhancing fairness. 
%   % In this figure, ``vanilla''/``Reg'' indicates the GCN trained without/with fairness regularisation.
%   }
%   \label{fig:pre:risk_change}
%   % \vspace{-10pt}
% % \end{figure*}
% \end{figure}

% \begin{figure}
\begin{figure*}[ht!]
  % \vspace{-125pt}
  \centering
  \includegraphics[width=\linewidth]{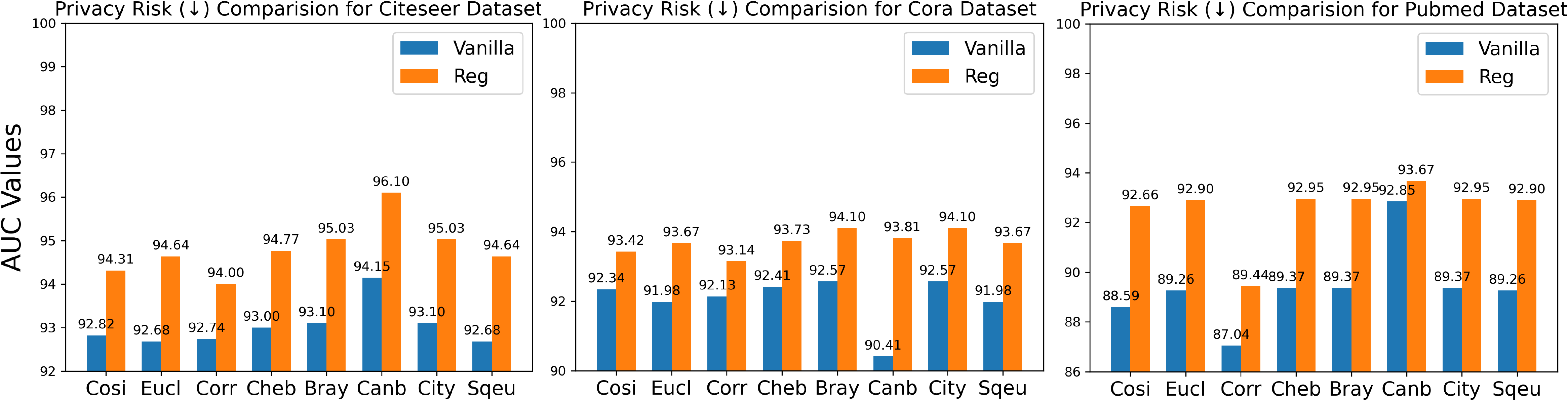}
  % \vspace{-5pt}
  \caption{
  % Changes of the privacy risks of edges. 
  % Privacy risks of edges before and after enhancing fairness. 
  % In this figure, ``vanilla''/``Reg'' indicates the GCN trained without/with fairness regularisation.
  Privacy risks of edges before and after enhancing fairness. In this figure, ``vanilla''/``Reg'' indicates the GCN trained without/with fairness regularisation. The smaller AUC $\Leftrightarrow$ the better privacy.
  }
  \label{fig:pre:risk_change}
  % \vspace{-10pt}
\end{figure*}
% \end{figure}

\subsubsection{Observations} Two main observations obtained from the evaluation results are summarised as follows.
\\
\noindent
\textbf{(1) Performance decrease when improving fairness:} 
% In our preliminary studies, we focus on the change of privacy risk on edges when boosting individual fairness in GNN predictions.
% Existing studies have shown that boosting fairness can potentially come at the cost of model performance. 
% As shown in Table \ref{tab:comparison:preli}, we obtain the similar observation, i.e., the bias on all datasets is reduced with sacrificing model performance. 
% Existing studies have shown that boosting fairness can potentially come at the cost of model performance. 
In Table \ref{tab:pre:performance_change}, we observe that boosting fairness comes at the cost of model performance, i.e., the prediction accuracy is sacrificed when improving fairness on all datasets. 
\\
\noindent
\textbf{(2) Privacy risks increase when improving fairness:}
Performance reduction is not the only adverse effect. 
As shown in Fig. \ref{fig:pre:risk_change}, changes in AUC scores indicate that edge leakage risks increase when GNN fairness is promoted, which have been observed on all datasets.
% Same observation on Citeseer and Pubmed datasets can be found in Fig. \ref{fig:pre:risk_change2}.

% This phenomenon can be explained by the definition of $\mathbf{S}$ and its different influences on different node pairs. In homophily graphs (e.g., Cora, Citeseer, Pubmed), similar nodes (i.e., nodes with the same label) are more likely to connect to each other \cite{abs-2202-07082}.
% In these graphs, calculating Jaccard similarity \cite{ZhangSPZNTZ20} between nodes leads $\mathbf{S}$ to assign higher values (e.g., 1) for node pairs in which two nodes own a higher proportion of the same 1-hop neighbour nodes, while lower values (e.g., 0) for other node pairs. Therefore, boosting individual fairness based on Jaccard similarity has limited even zero influence on the latter node pairs (i.e., almost unchanged large distances), but encourages nodes in the former to obtain more similar predictions (i.e., smaller distances). As a byproduct of promoting fairness, the distinction between connected and unconnected node pairs is increased. Although we can explain the interaction between fairness and privacy intuitively, comprehensively building trustworthy GNNs yearns for quantitatively measuring this trade-off and balancing the performance, fairness, and privacy of GNNs.

% Currently, existing methods employ the AUC score to evaluate the vulnerability of a GNN model to link stealing attacks, i.e., leakage risk of edges in the training graph \cite{HeJ0G021,0011L0022}. 

\subsection{Performance of our PPFR method}

% In this section, we conduct experiments to evaluate our proposed PPFR method. We introduce the experimental setup, followed by our experimental results and discussion.

\subsubsection{Experimental Setup} After introducing additional GNN models and evaluation metrics, we will present related baseline methods to achieve our design goal, i.e., \textit{improving the individual fairness of nodes with limited GNN performance cost and restricted privacy risks of edges}.
\\
\noindent
\textbf{Models and Metrics.} Besides the GCN model, we also evaluated PPFR in the GAT \cite{VelickovicCCRLB18} and GraphSage \cite{HamiltonYL17} models. Fairness and privacy metrics follow that in Section \ref{sec:exp:pre}. Note that the privacy result in this section is the average AUC derived from eight different distances. 

In this paper, we use the following metric to evaluate the effect of a method $\Omega$ on both fairness and privacy, i.e.,
% \begin{equation}
% \label{eq:evaluation}
% \begin{aligned}
%     % \Delta_{bias}\times\Delta_{risk}=
%     \Delta &= \frac{\Delta_{bias}\Delta_{risk}}{|\Delta_{acc}|},
%     % &\frac{(\Omega_{bias} - \mathtt{w/o}_{bias})}{\mathtt{w/o}_{bias}}\frac{(\Omega_{risk} - \mathtt{w/o}_{risk})}{\mathtt{w/o}_{risk}},
% \end{aligned}
% \end{equation}
\begin{equation}
\label{eq:metric}
    \Delta = \frac{\Delta_{bias}\Delta_{risk}}{|\Delta_{acc}|},
\end{equation}
% $\Delta = \frac{\Delta_{bias}\Delta_{risk}}{|\Delta_{acc}|}$,
where $\Delta_{(\cdot)} = \frac{\Omega_{(\cdot)} - \mathtt{w/o}_{(\cdot)}}{\mathtt{w/o}_{(\cdot)}}$ takes the bias (i.e., $f_{bias}$), privacy risk (i.e., AUC), or performance (i.e., accuracy) values of methods $\Omega$ and $\mathtt{w/o}$ as inputs to evaluate the change ratio on different metrics when using method $\Omega$.
$\mathtt{w/o}$ represents the GNN model obtained by vanilla training. Note that the proposed metric $\Delta$ in (\ref{eq:metric}) satisfies:
(1) 
% $\Delta$ measures the cost-effectiveness w.r.t. GNN performance when promoting both fairness and privacy. 
The numerical aspect of $\Delta$ measures the combined benefit of fairness and privacy per unit of performance cost incurred by a method on the GNN model.
(2)  A positive $\Delta$ (i.e., $\Delta > 0$) indicates that the method $\Omega$ can increase fairness and privacy simultaneously; otherwise, $\Delta$ is negative (i.e., $\Delta < 0$).
\\
\noindent
\textit{Remarks.} 
In our evaluations, $\Delta_{bias}$ and $\Delta_{risk}$ are not both positive simultaneously. 
% Our evaluations have been checked to avoid mistakes such as $\Delta_{bias}$ and $\Delta_{risk}$ being positive simultaneously.
Considering that the GNN utility generally decreases when improving fairness and privacy, we focus on evaluating whether a method could achieve positive $\Delta$ value with limited performance cost, i.e., small $|\Delta_{Acc}|$.
% \\
% \noindent
% (4) Although 

\begin{table*}[ht!]
\centering
\begin{threeparttable}
\caption{Effectiveness of our PPFR method. The smaller $\Delta_{bias} \Leftrightarrow$ the better fairness, the smaller $\Delta_{risk} \Leftrightarrow$ the better privacy, and the larger {\color[HTML]{00B050} positive} $\Delta$ $\Leftrightarrow$ the better balance of fairness and privacy.
% and Correlation between $\mathcal{I}_{f_{bias}}$ and $\mathcal{I}_{f_{risk}}$.
}
\label{tab:effectiveness:ppfr}
% \scalebox{0.82}{
\begin{tabular}{clrrrrrrrrr}
\hline
\multicolumn{1}{l}{}       &                             & \multicolumn{3}{c}{GCN}                                                             & \multicolumn{3}{c}{GAT}                                                             & \multicolumn{3}{c}{GraphSage}                                                        \\ \hline
Datasets                   & \multicolumn{1}{c}{Methods} & \multicolumn{1}{r}{$\Delta_{bias}\downarrow$} & \multicolumn{1}{r}{$\Delta_{risk}\downarrow$} & \multicolumn{1}{r}{$\Delta \uparrow$}     & \multicolumn{1}{r}{$\Delta_{bias}\downarrow$} & \multicolumn{1}{r}{$\Delta_{risk}\downarrow$} & \multicolumn{1}{r}{$\Delta \uparrow$}     & \multicolumn{1}{r}{$\Delta_{bias}\downarrow$} & \multicolumn{1}{r}{$\Delta_{risk}\downarrow$} & \multicolumn{1}{r}{$\Delta \uparrow$}      \\ \hline
                           & Reg                         & -35.51                 & 1.80                   & {\color[HTML]{FF0000} -0.744} & -51.94                 & 2.03                   & {\color[HTML]{FF0000} -0.964} & -99.99                 & -1.54                  & {\color[HTML]{00B050} 0.170}  \\
                           & DPReg                       & -74.54                 & -18.24                 & {\color[HTML]{00B050} 0.317}  & -68.89                 & -22.04                 & {\color[HTML]{00B050} 0.349}  & -99.94                 & -3.08                  & {\color[HTML]{00B050} 0.389}  \\
                           & DPFR                        & -1.17                  & 0.34                   & {\color[HTML]{FF0000} -0.007} & -6.68                  & 0.18                   & {\color[HTML]{FF0000} -0.005} & -99.93                 & -4.75                  & {\color[HTML]{00B050} 0.331}  \\
\multirow{-4}{*}{Cora}     & PPFR(Ours)                  & -11.75                 & -0.73                  & {\color[HTML]{00B050} 0.015}  & -25.63                 & -0.58                  & {\color[HTML]{00B050} 0.025}  & -99.92                 & -4.84                  & {\color[HTML]{00B050} 0.338}  \\ \hline
                           & Reg                         & -32.36                 & 1.91                   & {\color[HTML]{FF0000} -0.717} & -38.02                 & 2.44                   & {\color[HTML]{FF0000} -1.786} & -92.48                 & -0.28                  & {\color[HTML]{00B050} 0.049}  \\
                           & DPReg                       & 38.65                  & -15.87                 & {\color[HTML]{FF0000} -0.221} & -20.99                 & -21.84                 & {\color[HTML]{00B050} 0.103}  & -100.00                & -10.24                 & {\color[HTML]{00B050} 0.401}  \\
                           & DPFR                        & 0.22                   & -0.08                  & {\color[HTML]{FF0000} -0.001} & -3.21                  & -0.09                  & {\color[HTML]{00B050} 0.002}  & -97.89                 & -4.37                  & {\color[HTML]{00B050} 0.513}  \\
\multirow{-4}{*}{Citeseer} & PPFR(Ours)                  & -14.16                 & -0.31                  & {\color[HTML]{00B050} 0.009}  & -32.84                 & 0.36                   & {\color[HTML]{FF0000} -0.013} & -98.04                 & -4.51                  & {\color[HTML]{00B050} 0.530}  \\ \hline
                           & Reg                         & -84.70                 & 3.54                   & {\color[HTML]{FF0000} -1.280} & -61.54                 & 1.54                   & {\color[HTML]{FF0000} -1.476} & -99.59                 & 3.42                   & {\color[HTML]{FF0000} -0.951} \\
                           & DPReg                       & -45.89                 & -33.35                 & {\color[HTML]{00B050} 0.531}  & -16.78                 & -25.82                 & {\color[HTML]{00B050} 0.415}  & -99.38                 & -0.60                  & {\color[HTML]{00B050} 0.161}  \\
                           & DPFR                        & -29.60                 & 0.94                   & {\color[HTML]{FF0000} -0.198} & -57.34                 & 1.37                   & {\color[HTML]{FF0000} -2.061} & -41.44                 & -4.62                  & {\color[HTML]{00B050} 0.277}  \\
\multirow{-4}{*}{Pubmed}   & PPFR(Ours)                  & -31.30                 & -0.18                  & {\color[HTML]{00B050} 0.012}  & -10.49                 & -0.26                  & {\color[HTML]{00B050} 0.019}  & -41.63                 & -4.80                  & {\color[HTML]{00B050} 0.284}  \\ \hline
\end{tabular}
% }
\begin{tablenotes}
    \footnotesize \item[*] In this table, columns $\Delta_{bias}$ and $\Delta_{risk}$ show evaluation results in the percentage (i.e.,\%) form.  
    % In the $\Delta$ columns, {\color[HTML]{00B050} green} colour represents desired results, while {\color[HTML]{FF0000} red} colour indicates unwanted results.
    % ``$r$'' in the last column indicates the Pearson correlation coefficient between $\mathcal{I}_{f_{bias}}$ and $\mathcal{I}_{f_{risk}}$.
\end{tablenotes}
\end{threeparttable}
% \vspace{-15pt}
\end{table*}

\begin{figure*}[ht!]
  \centering
  % \vspace{-15pt}
  \includegraphics[width=0.95\linewidth]{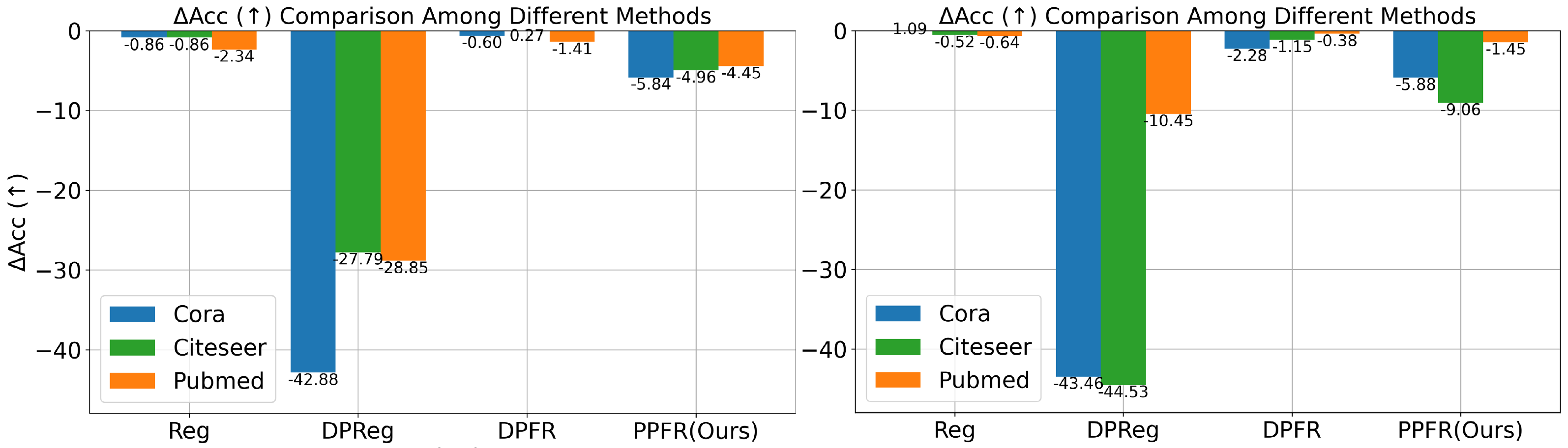}
  % \vspace{-10pt}
  \caption{The accuracy cost of different methods for fairness on the GCN (left figure) and GAT (right figure) models. 
  % A higher $\Delta$Acc ($\%$) indicates a lower cost of GNN performance.
  In this figure, the higher $\Delta$Acc ($\%$) $\Leftrightarrow$ the better the performance of GNNs.
  }
  \label{fig:acc_cost}
  \vspace{-10pt}
\end{figure*}

\noindent
\textbf{Baselines.}
To evaluate our method (i.e., \textbf{PPFR}), we combine the GNN privacy and fairness methods together as baselines. 

In this paper, we consider differential privacy (DP) (i.e., EdgeRand and LapGraph \cite{0011L0022}) as the method to improve the edge privacy of GNNs, and both EdgeRand and LapGraph are used in generating a perturbed adjacency matrix. Specifically, to boost edge privacy, EdgeRand/LapGraph uses a randomisation/Laplacian mechanism to introduce edge noise into the original graph structure. 
We follow the parameter setting in \cite{0011L0022} to introduce $\epsilon$-edge DP (i.e., EdgeRand/LapGraph) into our evaluation. 
According to the previous study \cite{0011L0022}, EdgeRand and LapGraph have similar effectiveness when $\epsilon$ is small, while LapGraph is more applicable to large graphs \hlr{in terms of efficiency}. Thus, we use EdgeRand on Cora and Citeseer datasets and LapGraph on the Pubmed dataset.
Refer to Section \ref{sec:relatedwork:privacy} for why we focus on the edge DP method.

In this paper, \textbf{Reg} represents that fairness regularisation is introduced into the loss function of GNN models during their vanilla training. \textbf{DPReg} represents using the edge DP method and adding the fairness regularisation to loss simultaneously. \textbf{DPFR} indicates the combination of edge DP and our fairness-aware reweighting (FR) in this paper, where a perturbed graph is engaged in the fine-tuning phase.
% Noting that the target of link stealing attacks is the edge in the training graph, which is the baseline graph in evaluating edge leakage risks. Taking this attack setting into account, we adapt and involve current defence methods (i.e., EdgeRand and LapGraph) into the fine-tuning phase for evaluating the effectiveness of our privacy-aware perturbation method. 
In PPFR, the epoch number of the fine-tuning phase $\mathtt{e}_{re}$ is defined as $\mathtt{e}_{re}= s \mathtt{e}_{va}$, where $\mathtt{e}_{va}$ is the epoch number of vanilla training and $s$ is a hyper-parameter. In this paper, we set $s \in [0.1,0.25]$ for different combinations of model and datasets and $\alpha = 0.9$, $\beta = 0.1$ for QCLP. 
% All parameters used in our method can be found in the Appendix \ref{sec:appendix:para}.

% \begin{figure*}[ht!]
%   \centering
%   \includegraphics[width=0.7\linewidth]{}
%   \vspace{-5pt}
%   \caption{Changes of performance, fairness, and privacy of GNNs when changing parameters in PPFR. In this figure, the higher Acc ($\uparrow$) $\Leftrightarrow$ the better performance, the lower Risk ($\downarrow$) $\Leftrightarrow$ the better privacy, and the lower Bias ($\downarrow$) $\Leftrightarrow$ the better fairness. In the left, we use zero edge perturbations in PP; in the right, we fix the $\#$ epoch in the FR. The dashed lines indicate (accuracy, fairness, and privacy) evaluation results from the vanilla GNN models.}
%   \label{fig:ablation1}
% %   \vspace{-15pt}
% \end{figure*}

\subsubsection{Evaluation results}
This section evaluates the effectiveness of the PPFR method in increasing fairness with restricted privacy risks (i.e., Table \ref{tab:effectiveness:ppfr}) and its performance cost (i.e., Figures \ref{fig:acc_cost} and \ref{fig:acc_cost2}) to demonstrate its performance.
Note that, (1) desired methods are those with $\Delta acc (\uparrow)$, $\Delta_{bias} (\downarrow)$, and $\Delta_{risk} (\downarrow)$; (2) the weight ($\omega$) relations of different metrics in  comprehensively evaluating different methods are shown by the following ranking: 
\begin{equation*}
    \omega (\Delta acc) > \omega (\Delta_{bias}) = \omega (\Delta_{risk}),
\end{equation*}
% as the performance (i.e., accuracy) locates  
as the performance of GNNs lies in their central position to serve users, even if when improving fairness or decreasing privacy risks.

% \noindent\textbf{Correlation.} In the last column of Table \ref{tab:effectiveness:ppfr}, we quantitatively measure the correlation between fairness and privacy with our proposed index (i.e., Eq. \ref{eq:correlation}).
% The negative correlation results between $\mathcal{I}_{f_{bias}}$ and $\mathcal{I}_{f_{risk}}$ show that there exists a negative correlation between node individual fairness and edge leakage risks. 
% These correlation results are consistent with our observations (i.e., the trade-off between fairness and privacy) in Table \ref{tab:comparison:preli}, and also underpin the design of our PPFR method, that is, it does not involve $\mathcal{I}_{f_{bias}}$ and $\mathcal{I}_{f_{risk}}$ simultaneously in linear programming.

% Other results in Table \ref{tab:effectiveness:ppfr} verify the effectiveness of our PPFR method.
% \noindent\textbf{Effectiveness.} 
Evaluation results demonstrate that PPFR outperforms baselines in \textit{balancing fairness and privacy} and \textit{limiting performance cost}. 
Specifically, our main observations on PPFR are:

\noindent
\textbf{Boosting fairness with restricted privacy risks. } The outperformance of our PPFR method can be summarised as follows.
\\
\noindent
(1) \ul{PPFR owns positive $\Delta$ values or better behaviour:} Among all methods, only PPFR and DPReg have a positive sign in almost all cases, indicating that they can increase fairness while restricting privacy risks. 

Although the DPReg method can achieve positive $\Delta$ results, several weaknesses have been recognised in this method.
First, edge DP is applied to the training graph of GNNs to protect the privacy of edges, however, introducing these random edge noises potentially into the vanilla training process harms the performance of GNNs. 
For example, we observe a huge performance cost (e.g., $\Delta acc= -44.53\%$ in (Citeseer,GAT) case) according to Fig. \ref{fig:acc_cost}.
Second, the sampling operation in GraphSage \cite{HamiltonYL17} extremely reduces the effectiveness of edge DP. 
This is because only limited noisy edges from edge DP can participate in the training process of GNNs with the sampling operations. 
Our evaluation results in Table \ref{tab:effectiveness:ppfr} show that the $\Delta_{risk}$ of DPReg with the GraphSage (e.g., $-0.60\%$ on Pubmed) is obvious less than that with GCN and GAT (e.g., $-33.35\%$ and $-25.82\%$ on Pubmed, respectively).
In contrast, our PPFR method uses a specially designed graph structure modification strategy and employs this privacy-aware strategy in the fine-tuning phase, which helps to protect privacy of edges and reduce performance cost at the same time.

\begin{figure*}[ht!]
  \centering
  \includegraphics[width=\linewidth]{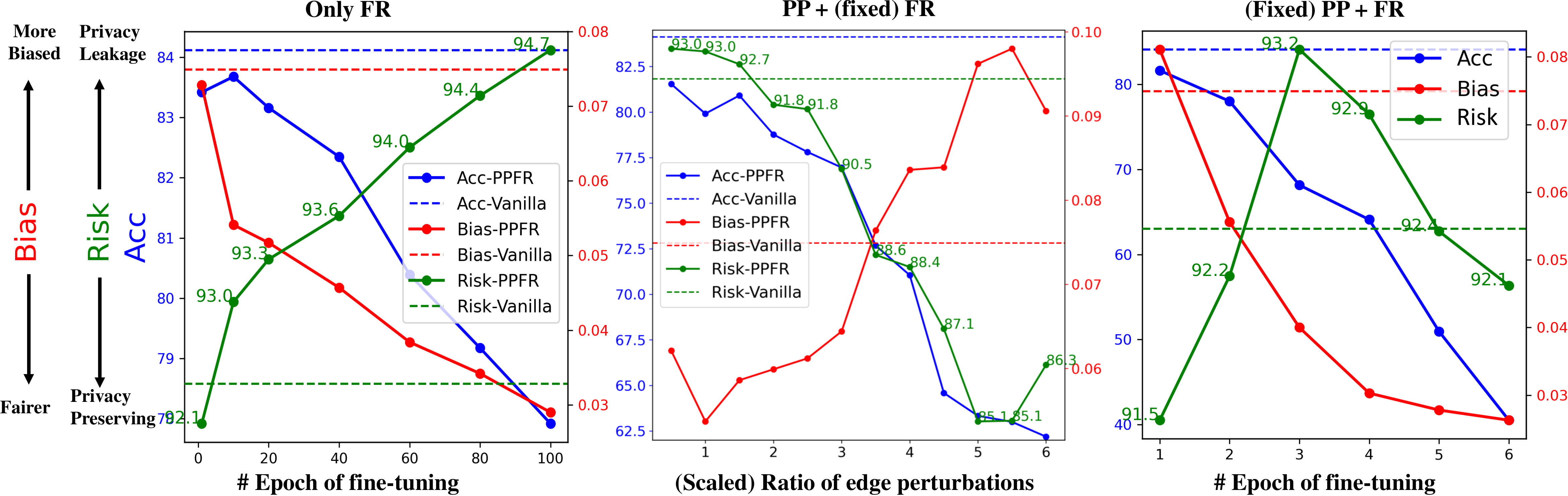}
  % \vspace{-5pt}
  \caption{Changes of performance, fairness, and privacy of GNNs when changing parameters in PPFR. In this figure, the higher Acc ($\uparrow$) $\Leftrightarrow$ the better performance, the lower Risk ($\downarrow$) $\Leftrightarrow$ the better privacy, and the lower Bias ($\downarrow$) $\Leftrightarrow$ the better fairness. On the left, we use zero edge perturbations in PP; in the middle, we fix the $\#$ epoch in the FR; on the right, e use fixed edge perturbations in PP, and change the $\#$ epoch in the FR. The dashed lines indicate (accuracy, fairness, and privacy) evaluation results from the vanilla GNN models.}
  \label{fig:ablation}
  \vspace{-10pt}
\end{figure*}

Moreover, PPFR and DPReg behave differently in cases where they have negative values. 
For example, in case (Citeseer, GCN), DPReg focusses on reducing privacy risks while leading to bias increase, which indicates that noisy edges from DP potentially harm individual fairness. 
In contrast, in the case (Citeseer, GAT), PPFR produces the same level of debiasing and restrains the increased privacy risks in the Reg method.
% (2) \ul{ Adverse effect on privacy of the vanilla Reg method}: According to the $\Delta_{risk}$ results in Table \ref{tab:effectiveness:ppfr}, the method (i.e., Reg) which only improves fairness leads to most cases where privacy risks of GNNs are increased, which further confirms the necessity of involving privacy consideration.
% in addition to Section \ref{sec:exp:pre}. 
\\
\noindent
(2) \ul{Introducing privacy consideration helps reduce leakage risks:} Compared to the Reg method, methods (i.e., DPReg, DPFR, and PPFR) involving privacy consideration have a larger $\Delta$, indicating that introducing privacy consideration in GNN helps reduce edge leakage risks.
\\
\noindent
(3) \ul{PP outperforms DP when combined with FR:} According to the $\Delta$ results in Table \ref{tab:effectiveness:ppfr}, our PPFR method performs better in increasing fairness with restricted privacy risks than DPFR. This is because, when introducing same-level noises, the randomisation/Laplacian edge noises in the DP are less effective than the heterophilic edge noise in our privacy-aware perturbations (PP) method, which is specially designed based on our theoretical analysis in Section \ref{sec:method:ppfr:pp}.

% % \noindent
% % \textbf{(2)} When selecting the vanilla training of GNNs (i.e., w/o rows) as baselines, compared to current fairness boosting methods (i.e., w rows), our PPFR method can promote fairness with inhibiting the rising of edge leakage risks. 
% \textbf{(2)} When comparing the DPFR and PPFR methods, our privacy-aware perturbation (PP) method is more effective than the edge DP methods. 
% Although edge DP (i.e., DPFR rows) is effective in controlling edge leakage risks compared to current fairness promotion methods (e.g., Reg rows), it still poses a higher privacy risk than vanilla GNNs. 
% In contrast, our method is more effective since it presents equal even lower privacy risks in almost all cases, which indicates PPFR can achieve fairness with non-negative privacy impacts.

\noindent
\textbf{Performance cost comparison.} Current works have shown that both the fairness \cite{DongKTL21} and privacy \cite{0011L0022} of GNNs are at the cost of performance. However, the performance of GNNs lies at their core when providing high-quality services to users. 
% In addition to the above overall evaluations w.r.t. $\Delta = \frac{\Delta_{bias}\Delta_{risk}}{|\Delta_{acc}|}$, we further compare the performance cost (i.e., $\Delta_{acc}$) of different methods. 
As shown in Figures \ref{fig:acc_cost} and \ref{fig:acc_cost2}, we observe that:
\\
\noindent
(1) \ul{PPFR has a lower performance cost than DPReg:} As a straightforward baseline, directly combining the privacy (DP) and fairness (Reg) methods together (i.e., DPReg) is at a huge performance cost. 
For example, the performance of DPReg in cases (Cora, GCN/GAT; Citeseer, GAT) drops by more than $40\%$ compared with the vanilla GNN models, potentially leading to the impracticability of GNN systems. 
This is because involving the same level (i.e., same number) of noisy edges in the initial training of GNNs harms performance more than involving them in the latter training phase. 
In contrast, due to considering GNN performance in our FR and the two-phase training design, our PPFR method has a lower performance cost.
\\
\noindent
(2) \ul{PPFR has the limited performance cost compared to Reg:} Compared to the Reg method, our PPFR method only leads to a limited extra performance cost. 
However, this additional performance is necessary because protecting the privacy of GNNs comes at the cost of performance \cite{0011L0022}. 
Overall, our PPFR method still maintains the competent performance (e.g., $81.57\%$ accuracy on Pubmed, GCN) when serving GNN users.

\subsection{Ablation Study}

% \begin{figure*}[ht!]
%   \centering
%   \includegraphics[width=\linewidth]{fig/ablation1.pdf}
%   \vspace{-10pt}
%   \caption{Changes of accuracy, bias, and privacy risks when changing the epoch number of fine-tuning phase of PPFR. We fix the $\#$ noisy edges in PP (0 in the left figure) and increase the fine-tuning epoch. The dashed lines indicate the accuracy, bias, and average privacy risks of eight distance functions from the vanilla GNN models.}
%   \label{fig:ablation1}
% %   \vspace{-15pt}
% \end{figure*}

% % \begin{wrapfigure}{r}{0.56\textwidth}
% \begin{wrapfigure}{r}{0.5\textwidth}
% % \begin{figure*}[ht!]
%   \vspace{-12pt}
%   \centering
%   \includegraphics[width=\linewidth]{}
%   \vspace{-12pt}
%   \caption{Changes of accuracy, bias, and privacy risks when using different number of noisy edges in PP.}
%   \label{fig:ablation2}
% %   \vspace{-15pt}
% % \end{figure*}
% \end{wrapfigure}

In this section, we conduct ablation studies of parameters in the PP (i.e., ratio of edge perturbations) and FR (i.e., $\#$ epoch in fine-tuning). These ablation studies are evaluated on the Cora dataset with GAT model. 
We observe that:
\\
\noindent
(1) \ul{Only improving fairness harms the privacy of edges}: As shown in (left) Fig. \ref{fig:ablation}, both accuracy and privacy risks suffer the adverse effect of improving individual fairness, which further answers RQ1 and confirms our observation in Section \ref{sec:exp:pre}.  
\\
\noindent
(2) \ul{PP is at the cost of GNN performance}: In (middle) Fig. \ref{fig:ablation}, GNN performance continues to decrease when more privacy-aware perturbations occur, which is consistent with previous studies on GNN privacy \cite{0011L0022}.
\\
\noindent
(3) \ul{Trade-off between fairness and privacy}: Besides our previous observation that improving fairness harms the privacy of edges, (middle) Fig. \ref{fig:ablation} shows that reducing the risk of privacy makes GNNs more biased, which potentially indicates the existence of a trade-off between individual fairness of nodes and privacy risks of edges.
This observation confirms our conclusion in Fig. \ref{fig:interactions}, that is, there is a trade-off between any two of performance, individual fairness of nodes, and privacy risks of edges.

% \begin{figure}[ht!]
%   \centering
%   \includegraphics[width=0.8\linewidth]{}
%   \caption{Effectiveness of $f_{risk}$ in evaluating link leakage risks on Citeseer dataset. In this figure, ``vanilla'' indicates the GCN trained without fairness regularisation, ``Reg*'' and ``Reg**'' represent GCNs trained with fairness regularisation. Compared with ``Reg*'', ``Reg**'' uses different parameters to achieve better fairness. The right figure reflects the actual privacy risks (i.e., AUC values under eight different distance functions) when improving GNN fairness. In the figure on the left, we observe that the bias decreases when promoting fairness and $f_{risk}$ is effective in measuring changed privacy risks. }
%   \label{fig:f_risk_effective}
% %   \vspace{-15pt}
% \end{figure}

We also evaluate our method in the case of changing the epoch number in FR while using non-zero fixed edge perturbations in (right) Fig. \ref{fig:ablation}, we observe that
\\
\noindent
(4) \ul{PP helps restraining privacy risks}: Compared to (left) Fig. \ref{fig:ablation}, privacy risks are restricted at the same level as the vanilla GNNs when the PP method is introduced. 
This evaluation verifies that our privacy-aware perturbation strategy is effective in restricting the increased privacy risks of edges when improving individual fairness of nodes.
\\
\noindent
(5) \ul{Considering multi aspects of trustworthy GNNs bring more performance cost}: GNN performance in (right) Fig. \ref{fig:ablation} drops faster than that in (left) Fig. \ref{fig:ablation}, indicating that considering both fairness and privacy leads to a higher performance cost.

\begin{table}[ht!]
\centering
\caption{
\hlr{
Evaluations of the GCN on datasets with weak homophily.
}
}
\label{tab:effectiveness:ppfr2}
% \scalebox{0.82}{
\begin{tabular}{clrrrr}
\hline
Datasets                  & \multicolumn{1}{c}{Methods} & \multicolumn{1}{c}{$\Delta_{acc}$} & {$\Delta_{bias}\downarrow$} & \multicolumn{1}{r}{$\Delta_{risk}\downarrow$} & \multicolumn{1}{r}{$\Delta \uparrow$}       \\ \hline
                          & Reg                           & -3.64                   & -37.33                   & 2.21                    & {\color[HTML]{FE0000} -0.227}   \\
                          & DPReg                         & -36.79                  & -58.06                   & -34.08                  & {\color[HTML]{00B050} 0.538}    \\
                          & DPFR                        & -2.65                   & -42.86                   & -0.21                   & {\color[HTML]{00B050} 0.033}    \\
\multirow{-4}{*}{Enzymes} & PPFR(Ours)                  & -4.35                   & -32.26                   & -0.17                   & {\color[HTML]{00B050} 0.012}    \\ \hline
                          & Reg                           & -6.55                   & -12.61                   & -15.62                  & {\color[HTML]{00B050} 0.301}    \\
                          & DPReg                         & -3.12                   & 2775.68                  & -37.36                  & {\color[HTML]{FE0000} -332.668} \\
                          & DPFR                        & 2.40                    & -77.48                   & -7.20                   & {\color[HTML]{00B050} 2.320}    \\
\multirow{-4}{*}{Credit}  & PPFR(Ours)                  & 3.26                    & -91.89                   & -9.42                   & {\color[HTML]{00B050} 2.652}    \\ \hline
\end{tabular}
% }
\begin{threeparttable}
\begin{tablenotes}
    \footnotesize \item[*] Columns $\Delta_{acc}$, $\Delta_{bias}$ and $\Delta_{risk}$ show evaluation results in the percentage (i.e.,\%) form. 
    % In the cases (e.g., results with Credit) where the ``Reg" method has a positive $\Delta$ value, although it is not necessary to employ our method, our method still has better performance in terms of improving fairness and privacy simultaneously and additional accuracy increment.
    In cases where the "Reg" has a positive $\Delta$ value (e.g., results with Credit), although using our methodology is not obligatory, our approach excels in concurrently augmenting fairness and privacy (i.e., positive $\Delta$), while also offering an incremental increase in accuracy.
    % In the $\Delta$ columns, {\color[HTML]{00B050} green} colour represents desired results, while {\color[HTML]{FF0000} red} colour indicates unwanted results.
    % ``$r$'' in the last column indicates the Pearson correlation coefficient between $\mathcal{I}_{f_{bias}}$ and $\mathcal{I}_{f_{risk}}$.
\end{tablenotes}
\end{threeparttable}
% \vspace{-15pt}
\end{table}

% \begin{wrapfigure}{l}{0.5\textwidth}
\begin{figure}[h!]
  \centering
  % \vspace{-15pt}
  \includegraphics[width=0.95\linewidth]{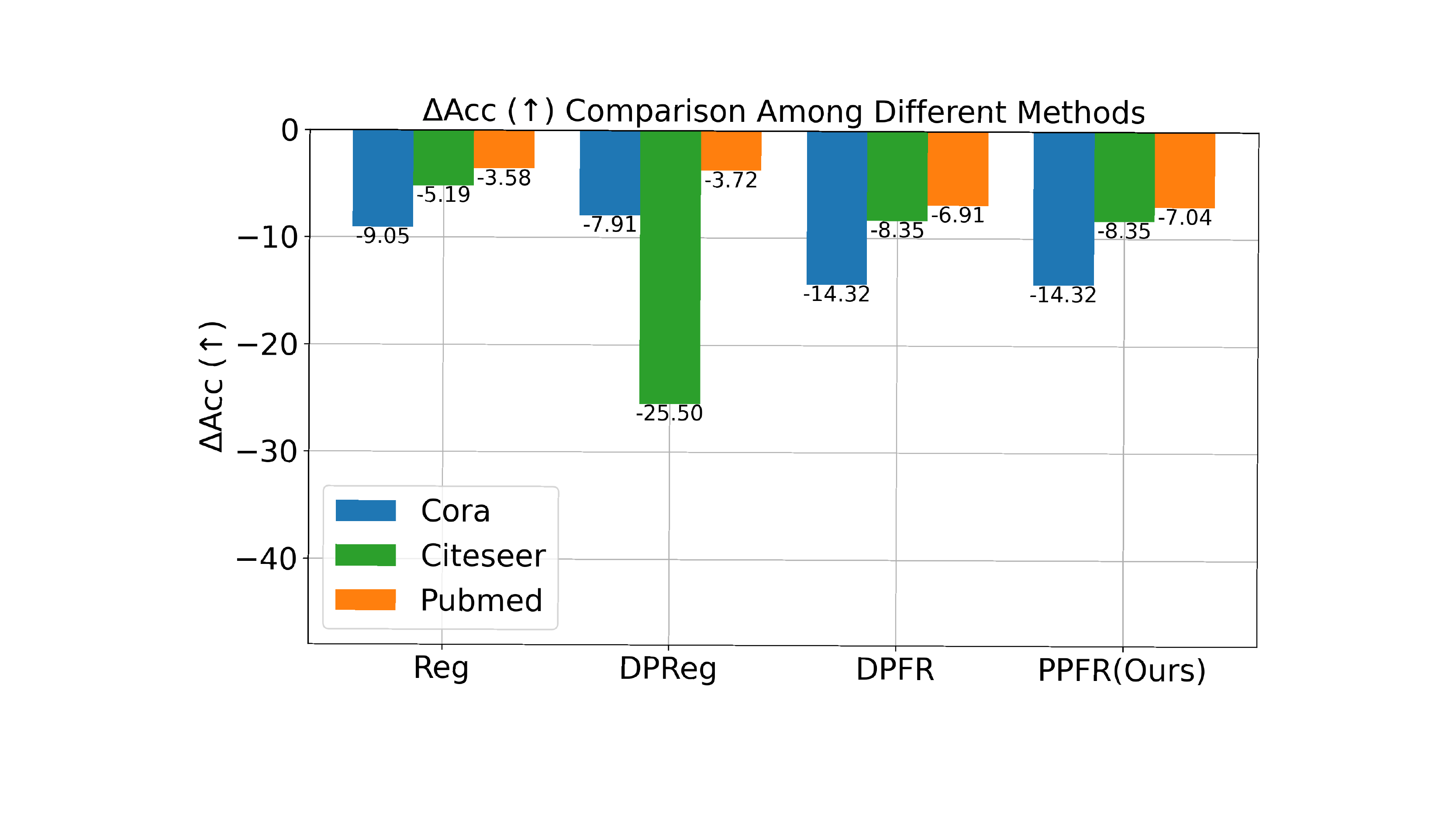}
  % \vspace{-15pt}
  \caption{The accuracy cost of different methods on the GraphSAGE models.}
  \label{fig:acc_cost2}
  \vspace{-15pt}
\end{figure}
% \end{wrapfigure}

\subsection{Discussion}

% \hlr{
% \subsubsection{Limitation Evaluations}

\noindent
\hlr{
\textbf{Performance on graphs with weak homophily.}
% \label{sec:exp:limitation}
The above results (i.e., Table \ref{tab:effectiveness:ppfr}) confirm the existence of trade-off between fairness and privacy of GNNs on graph datasets with high homophily (e.g., the homophily value of Cora, Citeseer and Pubmed is 0.81, 0.74, and 0.80, respectively). 
To further investigate the trade-off and performance of different methods, we performed evaluations on graphs with weak homophily.
As shown in Table \ref{tab:effectiveness:ppfr2}, we observe that:
\\
\noindent
(1) \ul{Due to the complex interaction between fairness and privacy, their trade-off could be limited or non-existent when handling graphs with weak homophily}: 
For example, for graphs with weak homophily (e.g., 0.66/0.62 for Enzymes \cite{dobson2003distinguishing}/Credit\cite{AgarwalLZ21}), the $\Delta$ values (e.g., -0.227/0.301) of the Reg fairness method in these datasets are obviously higher than those in Table \ref{tab:effectiveness:ppfr} (e.g., -0.744 on Cora).
Although our proposition \ref{eq:proposition} provides insight in understanding the trade-off, it is necessary to involve more factors to comprehensively investigate the complex interaction between fairness and privacy.
\\
\noindent
(2) \ul{When combined with FR, PP and DP have the same level of performance on graphs with weak homophily}:
Unlike consistently achieving limited restrictions on privacy risks in Table \ref{tab:effectiveness:ppfr}, the DP (i.e., LapGraph) method can obtain results comparable to our PP method.
One potential reason is that the noisy edges from DP are more similar to the edge distribution of weak homophily graphs than that of high homophily graphs, making it harder for current privacy attack methods to distinguish real edges and fake noisy edges. 
}

% \subsection{Discussion}
\noindent
\hlr{
\textbf{The scope of fairness-privacy trade-off}}.
It is worthy to point out the conclusion in this paper, i.e., \textit{there is a trade-off between fairness and privacy of GNNs}, is derived and verified by considering the individual fairness of nodes and privacy risks of edges, where the similarity in the individual fairness is defined as the Jaccard similarity. 
As we discussed before, this study is motivated by the potential connection between the fairness goal and the intuition of link stealing attacks, that is \textit{``decreasing the prediction difference between similar nodes w.r.t. their Jaccard similarity''} facilitates distinguishing connected node pairs and unconnected node pairs by considering that \textit{``nodes with more similar predictions are more likely to be connected''}.

Note that trustworthiness-driven studies have emerged to improve the fairness or privacy of GNNs by utilising various methods or design goals (e.g., individual fairness with another perspective of similarity) \cite{abs-2205-07424}.
\hlr{
Our conclusion may not stand in the scenario where the graph properties and the definition (e.g., fairness) or method for trustworthiness are different from that in this paper.
This is due to the way fairness interacts with privacy impacted by various factors, such as the dataset property \hlr{(e.g., weak homophily in Table \ref{tab:effectiveness:ppfr2})}, GNN architectures, fairness/privacy goal, evaluation metrics, etc.
To comprehensively build trustworthy graph learning systems, it is crucial to investigate the interaction between fairness and privacy in a systematic manner.}
\hlr{
% Moreover, our fairness-aware re-weighting is potentially not optimal due to factors like limited number of training data, and the privacy-aware perturbation method is derived by focusing on intra-class node pairs.
% Thus, a more comprehensive fairness and privacy analysis in the future helps to devise better methods.
\\
\noindent
\textit{Remarks}. Our re-weighting may not be optimal due to factors like the limited number of training data. Additionally, the perturbation approach is obtained by focusing on intra-class node pairs. Therefore, future comprehensive analyses of fairness and privacy could lead to improved methods.
}

\section{Conclusion}
\label{sec:con}
In this paper, we investigate the interaction between the fairness and privacy of GNNs, which is indispensable to comprehensively build trustworthy GNNs. 
We theoretically analyse and empirically observe the adverse effects of node fairness on privacy risks of edges, and devise a fine-tuning method named PPFR to increase GNN fairness with limited performance cost and restricted privacy risk, whose effectiveness is validated by our experimental evaluations on real-world datasets. 
% In our method, the time complexity of calculating influence functions, which is proportional to the size of the parameters \cite{DongWMLL2022}, potentially limits its application to large datasets and models. 
% In the future, we will improve its efficiency for scalability and explore other methods to balance the different aspects of trustworthy GNNs. 
Our work demonstrates the feasibility of considering performance, fairness, and privacy simultaneously in GNNs.
Experimental evaluations demonstrated the trade-off between the fairness and privacy of GNNs, and the outperformance of our PPFR method in navigating performance, fairness, and privacy.

In the future, we will explore the interactions among other different trustworthiness aspects (e.g. privacy \cite{DuanYH22}, explainability \cite{GaoWZCMWL23,RorsethGGKSS23}) to build trustworthy GNNs in a holistic way.
\hlr{For graphs with weak homophily, we will also investigate how to automatically determine whether it is necessary to employ methods that consider both fairness and privacy simultaneously.}
Another promising direction is to study how to automatically and dynamically change graph data and reweight the loss function at the same time during the training phase to achieve competent and trustworthy GNNs.

\section*{Acknowledgments}
This research was supported in part by the Australian Research Council (ARC) under grants FT210100097 and DP240101547. Xingliang Yuan and Shirui Pan are also supported by CSIRO – National Science Foundation (US) AI Research Collaboration Program.

% Authors may wish to optionally include extra information (complete proofs, additional experiments and plots) in the appendix. All such materials should be part of the supplemental material (submitted separately) and should NOT be included in the main submission.

% \newpage
\balance
\bibliographystyle{IEEEtran}
\bibliography{references}

% Generated by IEEEtran.bst, version: 1.14 (2015/08/26)
\begin{thebibliography}{10}
\providecommand{\url}[1]{#1}
\csname url@samestyle\endcsname
\providecommand{\newblock}{\relax}
\providecommand{\bibinfo}[2]{#2}
\providecommand{\BIBentrySTDinterwordspacing}{\spaceskip=0pt\relax}
\providecommand{\BIBentryALTinterwordstretchfactor}{4}
\providecommand{\BIBentryALTinterwordspacing}{\spaceskip=\fontdimen2\font plus
\BIBentryALTinterwordstretchfactor\fontdimen3\font minus
  \fontdimen4\font\relax}
\providecommand{\BIBforeignlanguage}[2]{{%
\expandafter\ifx\csname l@#1\endcsname\relax
\typeout{** WARNING: IEEEtran.bst: No hyphenation pattern has been}%
\typeout{** loaded for the language `#1'. Using the pattern for}%
\typeout{** the default language instead.}%
\else
\language=\csname l@#1\endcsname
\fi
#2}}
\providecommand{\BIBdecl}{\relax}
\BIBdecl

\bibitem{ZhengZLZWP23}
Y.~Zheng, H.~Zhang, V.~C. Lee, Y.~Zheng, X.~Wang, and S.~Pan, ``Finding the
  missing-half: Graph complementary learning for homophily-prone and
  heterophily-prone graphs,'' in \emph{{ICML}}, 2023.

\bibitem{abs-2401-06176}
L.~Wang, D.~He, H.~Zhang, Y.~Liu, W.~Wang, S.~Pan, D.~Jin, and T.~Chua,
  ``{GOODAT:} towards test-time graph out-of-distribution detection,''
  \emph{AAAI}, 2024.

\bibitem{zheng2022multi}
X.~Zheng, M.~Zhang, C.~Chen, C.~Li, C.~Zhou, and S.~Pan, ``Multi-relational
  graph neural architecture search with fine-grained message passing,'' in
  \emph{ICDM}.\hskip 1em plus 0.5em minus 0.4em\relax IEEE, 2022, pp. 783--792.

\bibitem{zheng2023auto}
X.~Zheng, M.~Zhang, C.~Chen, Q.~Zhang, C.~Zhou, and S.~Pan, ``Auto-heg:
  Automated graph neural network on heterophilic graphs,'' in \emph{WWW}, 2023,
  pp. 611--620.

\bibitem{luollm_kg2024}
S.~Pan, L.~Luo, Y.~Wang, C.~Chen, J.~Wang, and X.~Wu, ``Unifying large language
  models and knowledge graphs: A roadmap,'' \emph{IEEE TKDE}, 2024.

\bibitem{pan2024integrating}
S.~Pan, Y.~Zheng, and Y.~Liu, ``Integrating graphs with large language models:
  Methods and prospects,'' \emph{IEEE Intelligent Systems}, vol.~39, no.~1, pp.
  64--68, 2024.

\bibitem{abs-2310-01728}
M.~Jin, S.~Wang, L.~Ma, Z.~Chu, J.~Y. Zhang, X.~Shi, P.~Chen, Y.~Liang, Y.~Li,
  S.~Pan, and Q.~Wen, ``Time-llm: Time series forecasting by reprogramming
  large language models,'' \emph{ICLR}, 2024.

\bibitem{ChiragHM2021}
C.~Agarwal, H.~Lakkaraju, and M.~Zitnik, ``Towards a unified framework for fair
  and stable graph representation learning,'' in \emph{{UAI}}, ser. Proceedings
  of Machine Learning Research, vol. 161.\hskip 1em plus 0.5em minus
  0.4em\relax {AUAI} Press, 2021, pp. 2114--2124.

\bibitem{LiWZHL21}
P.~Li, Y.~Wang, H.~Zhao, P.~Hong, and H.~Liu, ``On dyadic fairness: Exploring
  and mitigating bias in graph connections,'' in \emph{{ICLR}}.\hskip 1em plus
  0.5em minus 0.4em\relax OpenReview.net, 2021.

\bibitem{WangJ23}
L.~Wang and T.~Joachims, ``Uncertainty quantification for fairness in two-stage
  recommender systems,'' in \emph{{WSDM}}.\hskip 1em plus 0.5em minus
  0.4em\relax {ACM}, 2023, pp. 940--948.

\bibitem{FabrisSSB23}
A.~Fabris, G.~Silvello, G.~A. Susto, and A.~J. Biega, ``Pairwise fairness in
  ranking as a dissatisfaction measure,'' in \emph{{WSDM}}.\hskip 1em plus
  0.5em minus 0.4em\relax {ACM}, 2023, pp. 931--939.

\bibitem{WuYPY22}
B.~Wu, X.~Yang, S.~Pan, and X.~Yuan, ``Model extraction attacks on graph neural
  networks: Taxonomy and realization,'' in \emph{AsiaCCS}.\hskip 1em plus 0.5em
  minus 0.4em\relax {ACM}, 2022.

\bibitem{WuYPY21}
------, ``Adapting membership inference attacks to {GNN} for graph
  classification: Approaches and implications,'' in \emph{{ICDM}}.\hskip 1em
  plus 0.5em minus 0.4em\relax {IEEE}, 2021, pp. 1421--1426.

\bibitem{Gondara0C22}
L.~Gondara, K.~Wang, and R.~S. Carvalho, ``Differentially private ensemble
  classifiers for data streams,'' in \emph{{WSDM}}.\hskip 1em plus 0.5em minus
  0.4em\relax {ACM}, 2022, pp. 325--333.

\bibitem{DZhaoZ00L23}
C.~Zhao, Y.~Ze, J.~Dong, B.~Wang, and S.~Li, ``Differentially private temporal
  difference learning with stochastic nonconvex-strongly-concave
  optimization,'' in \emph{{WSDM}}.\hskip 1em plus 0.5em minus 0.4em\relax
  {ACM}, 2023, pp. 985--993.

\bibitem{PanZC23}
Q.~Pan, Y.~Zhu, and L.~Chu, ``Lumos: Heterogeneity-aware federated graph
  learning over decentralized devices,'' in \emph{{ICDE}}.\hskip 1em plus 0.5em
  minus 0.4em\relax {IEEE}, 2023, pp. 1914--1926.

\bibitem{abs-2312-07861}
B.~Wu, H.~Zhang, X.~Yang, S.~Wang, M.~Xue, S.~Pan, and X.~Yuan, ``Graphguard:
  Detecting and counteracting training data misuse in graph neural networks,''
  \emph{NDSS}, 2024.

\bibitem{zheng2023structure}
X.~Zheng, M.~Zhang, C.~Chen, Q.~V.~H. Nguyen, X.~Zhu, and S.~Pan,
  ``Structure-free graph condensation: From large-scale graphs to condensed
  graph-free data,'' in \emph{NeurIPS}, 2023.

\bibitem{abs-2312-07870}
B.~Wu, X.~Yuan, S.~Wang, Q.~Li, M.~Xue, and S.~Pan, ``Securing graph neural
  networks in mlaas: {A} comprehensive realization of query-based integrity
  verification,'' \emph{CoRR}, vol. abs/2312.07870, 2023.

\bibitem{YinZZLW23}
Z.~Yin, Q.~Zhang, W.~Zhang, R.~Li, and G.~Wang, ``Fairness-aware maximal
  biclique enumeration on bipartite graphs,'' in \emph{{ICDE}}.\hskip 1em plus
  0.5em minus 0.4em\relax {IEEE}, 2023, pp. 1665--1677.

\bibitem{MaGWSW23}
H.~Ma, S.~Guan, M.~Wang, Q.~Song, and Y.~Wu, ``Fair group summarization with
  graph patterns,'' in \emph{{ICDE}}.\hskip 1em plus 0.5em minus 0.4em\relax
  {IEEE}, 2023, pp. 1982--1994.

\bibitem{JinWZZDXP23}
D.~Jin, L.~Wang, H.~Zhang, Y.~Zheng, W.~Ding, F.~Xia, and S.~Pan, ``A survey on
  fairness-aware recommender systems,'' \emph{Inf. Fusion}, vol. 100, p.
  101906, 2023.

\bibitem{abs-2205-07424}
H.~Zhang, B.~Wu, X.~Yuan, S.~Pan, H.~Tong, and J.~Pei, ``Trustworthy graph
  neural networks: Aspects, methods and trends,'' \emph{Proceedings of the
  IEEE}, 2024.

\bibitem{ZhangYZP2022}
H.~Zhang, X.~Yuan, C.~Zhou, and S.~Pan, ``Projective ranking-based {GNN}
  evasion attacks,'' \emph{IEEE TKDE}, 2022.

\bibitem{ZhangWY0WYP21}
H.~Zhang, B.~Wu, X.~Yang, C.~Zhou, S.~Wang, X.~Yuan, and S.~Pan, ``Projective
  ranking: {A} transferable evasion attack method on graph neural networks,''
  in \emph{{CIKM}}.\hskip 1em plus 0.5em minus 0.4em\relax {ACM}, 2021, pp.
  3617--3621.

\bibitem{zheng2023towards}
X.~Zheng, Y.~Liu, Z.~Bao, M.~Fang, X.~Hu, A.~W.-C. Liew, and S.~Pan, ``Towards
  data-centric graph machine learning: Review and outlook,'' \emph{arXiv
  preprint arXiv:2309.10979}, 2023.

\bibitem{zheng2023gnnevaluator}
X.~Zheng, M.~Zhang, C.~Chen, S.~Molaei, C.~Zhou, and S.~Pan, ``Gnnevaluator:
  Evaluating gnn performance on unseen graphs without labels,'' in
  \emph{NeurIPS}, 2023.

\bibitem{HeJ0G021}
X.~He, J.~Jia, M.~Backes, N.~Z. Gong, and Y.~Zhang, ``Stealing links from graph
  neural networks,'' in \emph{{USENIX} Security Symposium}.\hskip 1em plus
  0.5em minus 0.4em\relax {USENIX} Association, 2021, pp. 2669--2686.

\bibitem{KangHMT20}
J.~Kang, J.~He, R.~Maciejewski, and H.~Tong, ``Inform: Individual fairness on
  graph mining,'' in \emph{{KDD}}.\hskip 1em plus 0.5em minus 0.4em\relax
  {ACM}, 2020, pp. 379--389.

\bibitem{LiangZ17}
Y.~Liang and P.~Zhao, ``Similarity search in graph databases: {A} multi-layered
  indexing approach,'' in \emph{{ICDE}}.\hskip 1em plus 0.5em minus 0.4em\relax
  {IEEE} Computer Society, 2017, pp. 783--794.

\bibitem{MassriMPM22}
M.~Massri, Z.~Mikl{\'{o}}s, P.~R. Parv{\'{e}}dy, and P.~Meye, ``Clock-g: {A}
  temporal graph management system with space-efficient storage technique,'' in
  \emph{{ICDE}}.\hskip 1em plus 0.5em minus 0.4em\relax {IEEE}, 2022, pp.
  2263--2276.

\bibitem{abs-2212-04481}
K.~Sharma, Y.~Lee, S.~Nambi, A.~Salian, S.~Shah, S.~Kim, and S.~Kumar, ``A
  survey of graph neural networks for social recommender systems,''
  \emph{CoRR}, vol. abs/2212.04481, 2022.

\bibitem{luo2024rog}
L.~Luo, Y.-F. Li, G.~Haffari, and S.~Pan, ``Reasoning on graphs: Faithful and
  interpretable large language model reasoning,'' in \emph{ICLR}, 2024.

\bibitem{abs-2310-07984}
Y.~Zheng, H.~Y. Koh, J.~Ju, A.~T.~N. Nguyen, L.~T. May, G.~I. Webb, and S.~Pan,
  ``Large language models for scientific synthesis, inference and
  explanation,'' \emph{CoRR}, vol. abs/2310.07984, 2023.

\bibitem{LiuD0LZP23}
Y.~Liu, K.~Ding, Q.~Lu, F.~Li, L.~Y. Zhang, and S.~Pan, ``Towards
  self-interpretable graph-level anomaly detection,'' in \emph{NeurIPS}, 2023.

\bibitem{DongKTL21}
Y.~Dong, J.~Kang, H.~Tong, and J.~Li, ``Individual fairness for graph neural
  networks: {A} ranking based approach,'' in \emph{{KDD}}.\hskip 1em plus 0.5em
  minus 0.4em\relax {ACM}, 2021, pp. 300--310.

\bibitem{0011L0022}
F.~Wu, Y.~Long, C.~Zhang, and B.~Li, ``{LINKTELLER:} recovering private edges
  from graph neural networks via influence analysis,'' in \emph{{IEEE}
  Symposium on Security and Privacy}.\hskip 1em plus 0.5em minus 0.4em\relax
  {IEEE}, 2022, pp. 2005--2024.

\bibitem{zhang2021balancing}
T.~Zhang, T.~Zhu, K.~Gao, W.~Zhou, and S.~Y. Philip, ``Balancing learning model
  privacy, fairness, and accuracy with early stopping criteria,'' \emph{IEEE
  TNNLS}, 2021.

\bibitem{ChangS21}
H.~Chang and R.~Shokri, ``On the privacy risks of algorithmic fairness,'' in
  \emph{EuroS{\&}P}.\hskip 1em plus 0.5em minus 0.4em\relax {IEEE}, 2021, pp.
  292--303.

\bibitem{WangGB0023}
X.~Wang, T.~Gu, X.~Bao, L.~Chang, and L.~Li, ``Individual fairness for local
  private graph neural network,'' \emph{Knowl. Based Syst.}, vol. 268, p.
  110490, 2023.

\bibitem{KusnerLRS17}
M.~J. Kusner, J.~R. Loftus, C.~Russell, and R.~Silva, ``Counterfactual
  fairness,'' in \emph{{NeurIPS}}, 2017, pp. 4066--4076.

\bibitem{VermaR18}
S.~Verma and J.~Rubin, ``Fairness definitions explained,'' in
  \emph{FairWare@ICSE}.\hskip 1em plus 0.5em minus 0.4em\relax {ACM}, 2018, pp.
  1--7.

\bibitem{DaiW21}
E.~Dai and S.~Wang, ``Say no to the discrimination: Learning fair graph neural
  networks with limited sensitive attribute information,'' in
  \emph{{WSDM}}.\hskip 1em plus 0.5em minus 0.4em\relax {ACM}, 2021, pp.
  680--688.

\bibitem{DongLJL22}
Y.~Dong, N.~Liu, B.~Jalaian, and J.~Li, ``{EDITS:} modeling and mitigating data
  bias for graph neural networks,'' in \emph{{WWW}}.\hskip 1em plus 0.5em minus
  0.4em\relax {ACM}, 2022, pp. 1259--1269.

\bibitem{DworkHPRZ12}
C.~Dwork, M.~Hardt, T.~Pitassi, O.~Reingold, and R.~S. Zemel, ``Fairness
  through awareness,'' in \emph{{ITCS}}.\hskip 1em plus 0.5em minus 0.4em\relax
  {ACM}, 2012, pp. 214--226.

\bibitem{XieMXY21}
H.~Xie, J.~Ma, L.~Xiong, and C.~Yang, ``Federated graph classification over
  non-iid graphs,'' in \emph{{NeurIPS}}, 2021.

\bibitem{abs-2305-15622}
P.~Xu, Y.~Zhou, B.~An, W.~Ai, and F.~Huang, ``Gfairhint: Improving individual
  fairness for graph neural networks via fairness hint,'' \emph{CoRR}, vol.
  abs/2305.15622, 2023.

\bibitem{AlsaadiS22}
B.~Alsaadi and K.~Saeedi, ``Data-driven effort estimation techniques of agile
  user stories: a systematic literature review,'' \emph{Artif. Intell. Rev.},
  vol.~55, no.~7, pp. 5485--5516, 2022.

\bibitem{AdamicG05}
L.~A. Adamic and N.~S. Glance, ``The political blogosphere and the 2004 {U.S.}
  election: divided they blog,'' in \emph{LinkKDD}.\hskip 1em plus 0.5em minus
  0.4em\relax {ACM}, 2005, pp. 36--43.

\bibitem{abs-2102-05429}
X.~He, R.~Wen, Y.~Wu, M.~Backes, Y.~Shen, and Y.~Zhang, ``Node-level membership
  inference attacks against graph neural networks,'' \emph{CoRR}, vol.
  abs/2102.05429, 2021.

\bibitem{WangGLCL21}
B.~Wang, J.~Guo, A.~Li, Y.~Chen, and H.~Li, ``Privacy-preserving representation
  learning on graphs: {A} mutual information perspective,'' in
  \emph{{KDD}}.\hskip 1em plus 0.5em minus 0.4em\relax {ACM}, 2021, pp.
  1667--1676.

\bibitem{XuYW19}
D.~Xu, S.~Yuan, and X.~Wu, ``Achieving differential privacy and fairness in
  logistic regression,'' in \emph{{WWW} (Companion Volume)}.\hskip 1em plus
  0.5em minus 0.4em\relax {ACM}, 2019, pp. 594--599.

\bibitem{HuLWL19}
H.~Hu, Y.~Liu, Z.~Wang, and C.~Lan, ``A distributed fair machine learning
  framework with private demographic data protection,'' in \emph{{ICDM}}.\hskip
  1em plus 0.5em minus 0.4em\relax {IEEE}, 2019, pp. 1102--1107.

\bibitem{DingZLWYP20}
J.~Ding, X.~Zhang, X.~Li, J.~Wang, R.~Yu, and M.~Pan, ``Differentially private
  and fair classification via calibrated functional mechanism,'' in
  \emph{{AAAI}}.\hskip 1em plus 0.5em minus 0.4em\relax {AAAI} Press, 2020, pp.
  622--629.

\bibitem{MozannarOS20}
H.~Mozannar, M.~I. Ohannessian, and N.~Srebro, ``Fair learning with private
  demographic data,'' in \emph{{ICML}}, ser. Proceedings of Machine Learning
  Research, vol. 119.\hskip 1em plus 0.5em minus 0.4em\relax {PMLR}, 2020, pp.
  7066--7075.

\bibitem{0007FH21}
C.~Tran, F.~Fioretto, and P.~V. Hentenryck, ``Differentially private and fair
  deep learning: {A} lagrangian dual approach,'' in \emph{{AAAI}}.\hskip 1em
  plus 0.5em minus 0.4em\relax {AAAI} Press, 2021, pp. 9932--9939.

\bibitem{LyuYNLMJYN20}
L.~Lyu, J.~Yu, K.~Nandakumar, Y.~Li, X.~Ma, J.~Jin, H.~Yu, and K.~S. Ng,
  ``Towards fair and privacy-preserving federated deep models,'' \emph{{IEEE}
  Trans. Parallel Distributed Syst.}, vol.~31, no.~11, pp. 2524--2541, 2020.

\bibitem{ParkBL22}
S.~Park, J.~Byun, and J.~Lee, ``Privacy-preserving fair learning of support
  vector machine with homomorphic encryption,'' in \emph{{WWW}}.\hskip 1em plus
  0.5em minus 0.4em\relax {ACM}, 2022, pp. 3572--3583.

\bibitem{KipfW17}
T.~N. Kipf and M.~Welling, ``Semi-supervised classification with graph
  convolutional networks,'' in \emph{{ICLR} (Poster)}.\hskip 1em plus 0.5em
  minus 0.4em\relax OpenReview.net, 2017.

\bibitem{GNNBook-ch4-tang}
J.~Tang and R.~Liao, ``Graph neural networks for node classification,'' in
  \emph{Graph Neural Networks: Foundations, Frontiers, and Applications},
  L.~Wu, P.~Cui, J.~Pei, and L.~Zhao, Eds.\hskip 1em plus 0.5em minus
  0.4em\relax Singapore: Springer Singapore, 2022, pp. 41--61.

\bibitem{GrettonBRSS06}
A.~Gretton, K.~M. Borgwardt, M.~J. Rasch, B.~Sch{\"{o}}lkopf, and A.~J. Smola,
  ``A kernel method for the two-sample-problem,'' in \emph{{NeurIPS}}.\hskip
  1em plus 0.5em minus 0.4em\relax {MIT} Press, 2006, pp. 513--520.

\bibitem{BalunovicRV22}
M.~Balunovic, A.~Ruoss, and M.~T. Vechev, ``Fair normalizing flows,'' in
  \emph{{ICLR}}.\hskip 1em plus 0.5em minus 0.4em\relax OpenReview.net, 2022.

\bibitem{PangC17}
J.~Pang and G.~Cheung, ``Graph laplacian regularization for image denoising:
  Analysis in the continuous domain,'' \emph{{IEEE} Trans. Image Process.},
  vol.~26, no.~4, pp. 1770--1785, 2017.

\bibitem{GuZLZRC22}
X.~Gu, T.~Zhu, J.~Li, T.~Zhang, W.~Ren, and K.~R. Choo, ``Privacy, accuracy,
  and model fairness trade-offs in federated learning,'' \emph{Comput. Secur.},
  vol. 122, p. 102907, 2022.

\bibitem{GNNBook2022}
L.~Wu, P.~Cui, J.~Pei, and L.~Zhao, \emph{Graph Neural Networks: Foundations,
  Frontiers, and Applications}.\hskip 1em plus 0.5em minus 0.4em\relax
  Singapore: Springer Singapore, 2022.

\bibitem{KohL17}
P.~W. Koh and P.~Liang, ``Understanding black-box predictions via influence
  functions,'' in \emph{{ICML}}, ser. Proceedings of Machine Learning Research,
  vol.~70.\hskip 1em plus 0.5em minus 0.4em\relax {PMLR}, 2017, pp. 1885--1894.

\bibitem{KangZT22}
J.~Kang, Q.~Zhou, and H.~Tong, ``Jurygcn: Quantifying jackknife uncertainty on
  graph convolutional networks,'' in \emph{{KDD}}.\hskip 1em plus 0.5em minus
  0.4em\relax {ACM}, 2022, pp. 742--752.

\bibitem{DongWMLL2022}
Y.~Dong, S.~Wang, J.~Ma, N.~Liu, and J.~Li, ``Interpreting unfairness in graph
  neural networks via training node attribution,'' in \emph{Proceedings of the
  AAAI Conference on Artificial Intelligence}, 2022.

\bibitem{LiL22a}
P.~Li and H.~Liu, ``Achieving fairness at no utility cost via data reweighing
  with influence,'' in \emph{{ICML}}, ser. Proceedings of Machine Learning
  Research, vol. 162.\hskip 1em plus 0.5em minus 0.4em\relax {PMLR}, 2022, pp.
  12\,917--12\,930.

\bibitem{Gurobi}
G.~O. LLC., ``Gurobi optimizer reference manual.''
  \url{https://www.gurobi.com/}, 2022.

\bibitem{akoglu2018user}
H.~Akoglu, ``User's guide to correlation coefficients,'' \emph{Turkish journal
  of emergency medicine}, vol.~18, no.~3, pp. 91--93, 2018.

\bibitem{PanHLJYZ18}
S.~Pan, R.~Hu, G.~Long, J.~Jiang, L.~Yao, and C.~Zhang, ``Adversarially
  regularized graph autoencoder for graph embedding,'' in \emph{{IJCAI}}.\hskip
  1em plus 0.5em minus 0.4em\relax ijcai.org, 2018, pp. 2609--2615.

\bibitem{ZhangWWYXPY23}
H.~Zhang, B.~Wu, S.~Wang, X.~Yang, M.~Xue, S.~Pan, and X.~Yuan, ``Demystifying
  uneven vulnerability of link stealing attacks against graph neural
  networks,'' in \emph{{ICML}}, ser. Proceedings of Machine Learning Research,
  vol. 202.\hskip 1em plus 0.5em minus 0.4em\relax {PMLR}, 2023, pp.
  41\,737--41\,752.

\bibitem{VelickovicCCRLB18}
P.~Velickovic, G.~Cucurull, A.~Casanova, A.~Romero, P.~Li{\`{o}}, and
  Y.~Bengio, ``Graph attention networks,'' in \emph{{ICLR} (Poster)}.\hskip 1em
  plus 0.5em minus 0.4em\relax OpenReview.net, 2018.

\bibitem{HamiltonYL17}
W.~L. Hamilton, Z.~Ying, and J.~Leskovec, ``Inductive representation learning
  on large graphs,'' in \emph{{NIPS}}, 2017, pp. 1024--1034.

\bibitem{dobson2003distinguishing}
P.~D. Dobson and A.~J. Doig, ``Distinguishing enzyme structures from
  non-enzymes without alignments,'' \emph{Journal of molecular biology}, vol.
  330, no.~4, pp. 771--783, 2003.

\bibitem{AgarwalLZ21}
C.~Agarwal, H.~Lakkaraju, and M.~Zitnik, ``Towards a unified framework for fair
  and stable graph representation learning,'' in \emph{{UAI}}, ser. Proceedings
  of Machine Learning Research, vol. 161.\hskip 1em plus 0.5em minus
  0.4em\relax {AUAI} Press, 2021, pp. 2114--2124.

\bibitem{DuanYH22}
J.~Duan, Q.~Ye, and H.~Hu, ``Utility analysis and enhancement of {LDP}
  mechanisms in high-dimensional space,'' in \emph{{ICDE}}.\hskip 1em plus
  0.5em minus 0.4em\relax {IEEE}, 2022, pp. 407--419.

\bibitem{GaoWZCMWL23}
Y.~Gao, P.~Wang, X.~Zeng, L.~Chen, Y.~Mao, Z.~Wei, and M.~Li, ``Towards
  explainable table interpretation using multi-view explanations,'' in
  \emph{{ICDE}}.\hskip 1em plus 0.5em minus 0.4em\relax {IEEE}, 2023, pp.
  1167--1179.

\bibitem{RorsethGGKSS23}
J.~Rorseth, P.~Godfrey, L.~Golab, M.~Kargar, D.~Srivastava, and J.~Szlichta,
  ``{CREDENCE:} counterfactual explanations for document ranking,'' in
  \emph{{ICDE}}.\hskip 1em plus 0.5em minus 0.4em\relax {IEEE}, 2023, pp.
  3631--3634.

\end{thebibliography}

% \begin{thebibliography}{00}
% \bibitem{b1} G. Eason, B. Noble, and I. N. Sneddon, ``On certain integrals of Lipschitz-Hankel type involving products of Bessel functions,'' Phil. Trans. Roy. Soc. London, vol. A247, pp. 529--551, April 1955.
% \bibitem{b2} J. Clerk Maxwell, A Treatise on Electricity and Magnetism, 3rd ed., vol. 2. Oxford: Clarendon, 1892, pp.68--73.
% \bibitem{b3} I. S. Jacobs and C. P. Bean, ``Fine particles, thin films and exchange anisotropy,'' in Magnetism, vol. III, G. T. Rado and H. Suhl, Eds. New York: Academic, 1963, pp. 271--350.
% \bibitem{b4} K. Elissa, ``Title of paper if known,'' unpublished.
% \bibitem{b5} R. Nicole, ``Title of paper with only first word capitalized,'' J. Name Stand. Abbrev., in press.
% \bibitem{b6} Y. Yorozu, M. Hirano, K. Oka, and Y. Tagawa, ``Electron spectroscopy studies on magneto-optical media and plastic substrate interface,'' IEEE Transl. J. Magn. Japan, vol. 2, pp. 740--741, August 1987 [Digests 9th Annual Conf. Magnetics Japan, p. 301, 1982].
% \bibitem{b7} M. Young, The Technical Writer's Handbook. Mill Valley, CA: University Science, 1989.
% \end{thebibliography}
% \vspace{12pt}
% \color{red}
% IEEE conference templates contain guidance text for composing and formatting conference papers. Please ensure that all template text is removed from your conference paper prior to submission to the conference. Failure to remove the template text from your paper may result in your paper not being published.

\end{document}